\documentclass{opt2026} 
\usepackage{caption}
\usepackage{subcaption}
\usepackage{amsfonts,amsmath,amssymb}
\usepackage{graphicx}
\usepackage{booktabs}
\usepackage{multirow}
\usepackage{tikz}
\usetikzlibrary{patterns}
\usepackage{pgfplots}
\pgfplotsset{compat=1.18}

\numberwithin{equation}{section}

\newcommand{\Xbar}{\bar X}
\newcommand{\xbar}{\bar x}
\newcommand{\Rz}{\mathbb{R}}
\newcommand{\Ex}{\mathbb{E}}
\newcommand{\one}{\mathbb{I}}

\makeatletter
\@ifundefined{remark}{}{}
\makeatother

\makeatletter
\renewcommand{\section}{\@startsection{section}{1}{\z@}%
   {-0.08in \@plus -1ex \@minus -.2ex}%
   {0.025in \@plus.2ex}%
   {\normalfont\rmfamily\bfseries\large\raggedright}%
}
\renewcommand\subsection{\@startsection{subsection}{2}{\z@}%
   {-0.07in \@plus -1ex \@minus -.2ex}%
   {0.02in \@plus .2ex}%
   {\normalfont\rmfamily\bfseries\normalsize\raggedright}%
}
\makeatother

\title[Escaping Local Optima in MAX-CUT] {Graph Coarsening Enables Escaping Local Optima in Compute-Constrained Nonconvex MAX-CUT}

\optauthor{%
\Name{M. Harper Langston} \Email{hlangsto@qti.qualcomm.com}\\
\Name{Pierre-David Letourneau} \Email{pletourn@qti.qualcomm.com}\\
\Name{Dalton Jones} \Email{daltjone@qti.qualcomm.com}\\
\Name{Harris Teague} \Email{hteague@qti.qualcomm.com}\\
\Name{Peter Scott} \Email{peter.scott@nyu.edu}\\
\Name{Mingu Lee} \Email{mingul@qti.qualcomm.com}\\
\Name{Richard A. Lethin} \Email{rlethin@qti.qualcomm.com}\\
\Name{Christopher Lott} \Email{clott@qti.qualcomm.com}\\
\Name{Roland Memisevic} \Email{rmemisev@qti.qualcomm.com}\\
\Name{Fatih Porikli} \Email{fporikli@qti.qualcomm.com}\\
\addr Qualcomm AI Research}

\begin{document}

\maketitle

\begin{abstract}
Because its binary
optimization landscape contains exponentially many local optima, MAX-CUT is highly sensitive to initialization. We
study Particle Polynomial Relaxation (PPR), a moment-based nonconvex continuous reformulation possessing an advantageous optimization landscape. We warm-start PPR with Combinatorial Multigrid (CMG), a
spectral graph-coarsening method that produces structure-aware initial
solutions in near-linear time without training data. On \texttt{t2g20} ($n=400$), a
benchmark instance with known exact optimum,
the combined CMG+PPR pipeline reaches $97.5\%$ of optimum on average
and $98.4\%$ on the best run.  An extended variant with discrete local-search refinement reaches the
published optimum on all sixty Biq Mac instances evaluated by a recent
neural branch-and-bound method. Using a sparse CSR representation, the
same approach scales to G-set instances with up to $20{,}000$ vertices,
reducing memory requirements by up to $3333\times$ while maintaining a
consistent improvement over the CMG warm start. Gains are largest
on sparse weighted graphs; on dense graphs we complement the pipeline with spectral sparsification.
\end{abstract}

\section{Introduction}
\label{sec:introduction}

MAX-CUT asks for a partition of a graph's vertices that maximizes the total
weight of crossing edges~\citep{karp1972reducibility,garey1979computers}. With $x_i\in\{-1,+1\}$, it can be written as the
nonconvex quadratic program
\begin{equation}
    \label{eq:max_cut_prob}
      \min_{x\in\{-1,+1\}^n} -\frac14\sum_{(i,j)\in E}w_{ij}(1-x_ix_j),
\end{equation}
equivalently as a ground-state search for the associated Ising
energy~\citep{barahona1982ising,lucas2014ising}. We first note that the flip neighborhood on $\{-1,+1\}^n$ may contain exponentially many local
optima; so, a local method's terminal cut can be determined more by the initial
point than by the amount of search performed afterward. 
Second, the classical
Goemans-Williamson $0.878$ approximation ratio ~\citep{goemans1995}, tight
under the Unique Games Conjecture in the worst case~\citep{khot2007}, does
not describe how close a nonconvex solver can get on structured instances when
initialization already captures useful graph information.

\begin{figure}[htbp!]
\centering
\includegraphics[width=\textwidth]{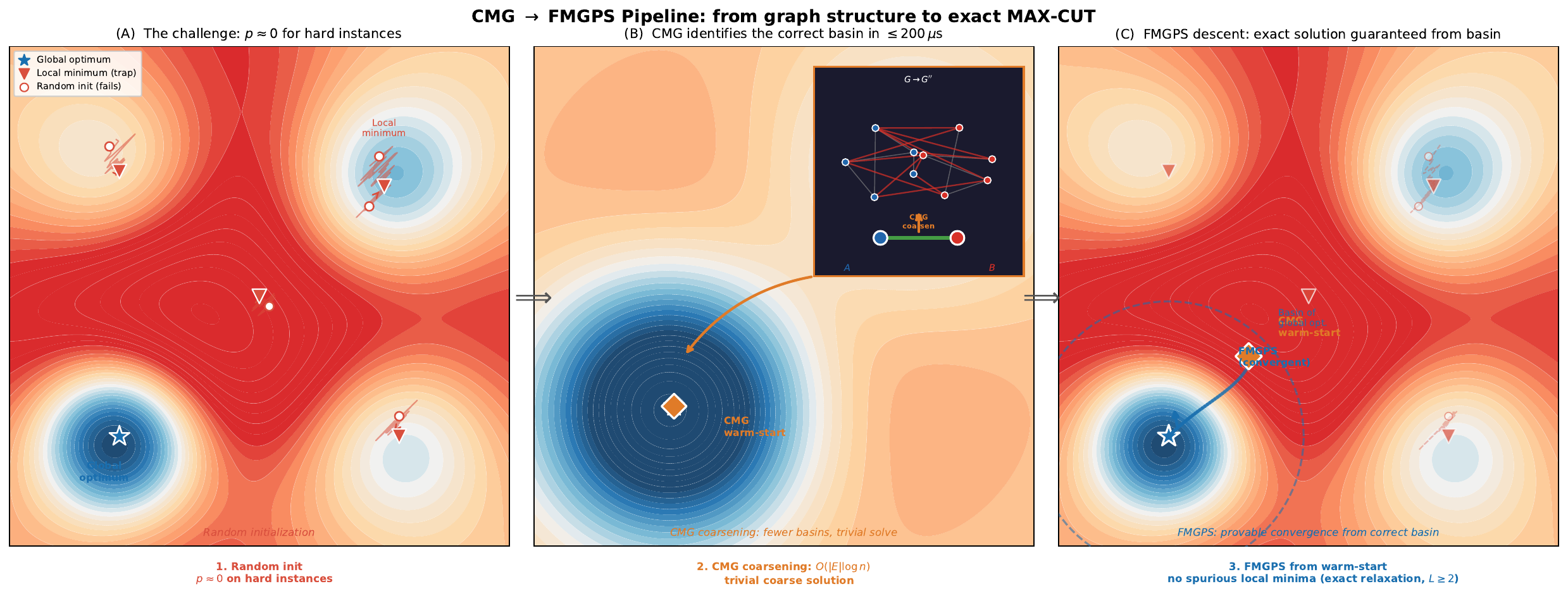}
\caption{\small Schematic comparison of random and structure-aware initialization.
  \textbf{Left:} random restarts are likely to terminate in different local
  optima of the flip landscape. \textbf{Center:} CMG coarsening summarizes the
  graph into larger structural units and returns a warm start. \textbf{Right:}
  PPR is then initialized in that basin and can make multi-node moves that are
  inaccessible to one-bit local search.}
\label{fig:naive_fails}
\end{figure}

We investigate whether two complementary ideas can be combined effectively. {\em First}: a continuous moment-based reformulation which possesses a highly-advantageous optimization landscape, namely, every suboptimal point has zero escape barrier when its mixing weight is free to move (Appendix~\ref{sec:escape_barrier_appendix} and~\citep{letourneau2023polyopt}). {\em Second}: a graph coarsening warm
start that places the finite-particle solver in a useful basin. We find that
this combination is effective on sparse and weighted structured instances and
that it scales to $n=20{,}000$ when the coupling matrix is stored sparsely.
The same experiments also identify a clear boundary: on dense or nearly
homogeneous graphs, CMG often has little structure to exploit, and the
pipeline provides little or no improvement over the warm start.  Appendix~\ref{app:extended} gives additional background, mechanical details,
and a small end-to-end walkthrough. The main text focuses on the solver, initialization, and empirical behavior of the composed pipeline.

\textbf{Relation to learned combinatorial solvers.} Recent learned MAX-CUT
methods supply structural information through training. OptGNN
\citep{yau2024optgnn} learns an overparameterized Burer-Monteiro
\citep{burer2003} update, while Chen et al.~\citep{chen2026neuralsdp} train a
GNN to predict SDP certificates used inside branch-and-bound. Our approach
uses a different source of structure: spectral graph coarsening on the input
graph itself. This requires no labels, no training distribution, and no neural
inference hardware. We do not claim uniform superiority, but on the six Biq
Mac families evaluated by~\citep{chen2026neuralsdp}, our full heuristic pipeline reaches the
same published optima on all $60$ instances without their training setup
(Table~\ref{tab:biqmac}).

\section{Method: moment relaxation with a graph-coarsening warm start}
\label{sec:method}

\subsection{Continuous Reformulation through Discrete/Particle Product Measures}
\label{sec:fmgps}

Our proposed PPR method is inspired by the work of Lasserre \citep{lasserre2001} and Letourneau et al.~\citep{letourneau2023polyopt} which both address the problem of global polynomial optimization through a moment-based reformulation. In this case, consider a \emph{level-$l$ discrete product measure} to be defined as,
\begin{equation}
  \mu^{(l)}(\cdot) \;=\; \bigotimes_{i=1}^D
    \Bigl(\sum_{r=1}^R w_i^{(l,r)}\,\delta_{\xbar_i^{(r)}}\Bigr),
  \qquad
  w_i^{(l,\cdot)}\in\Delta^{R-1},
\end{equation}
where $\Delta^{R-1}:=\{w\in\Rz_{\ge0}^R:\sum_rw_r=1\}$ is the
probability simplex; $R$ is thus the number of particles (point masses, or
atoms) used to represent each coordinate's marginal, and $l=1,\ldots,L$
indexes one of $L$ independently optimized such mixtures (\emph{levels}),
combined below through a mixing weight $\beta$. Its degree-$n$ marginal moment in coordinate $i$ is
\begin{equation}
  a_{n,i}^{(l)} \;:=\; \int T_n(x_i) \, \mathrm{d} \mu^{(l)}(x) = \sum_{r=1}^R w_i^{(l,r)}\,T_n\bigl(\xbar_i^{(r)}\bigr),
\end{equation}
where $\{T_n (x) \}$ is a one-dimensional graded polynomial basis. Given a polynomial \\$p(x) = \sum_{\mathbf{n} \in \mathrm{supp}(p)} p_{\mathbf{n}} \, T_{\mathbf{n}} (x)$,\footnote{$\mathbf{n} \in \mathbb{N}$ is a \emph{multi-index}. $\mathrm{supp}(p)$ is the set of nonzero coefficients of $p(\cdot)$ (in this specific basis) and $T_{\mathbf{n}}(x) := \prod_{i=1}^D T_{n_i}(x)$ is a multi-dimensional polynomial.} define the
\emph{single-level functional}
\begin{equation}
  g(w) \;:=\; \Ex_{\mu(w)}[p(x)]
       \;=\; \sum_{\mathbf n\in\mathrm{supp}(p)} p_{\mathbf n}
             \prod_{i=1}^D a_{n_i,i}(w_i),
  \qquad w\in(\Delta^{R-1})^D,
\end{equation}
and, for $L\ge1$ levels with a \emph{free} mixing weight
$\beta\in\Delta^{L-1}$, the objective
\begin{equation}
  F(w,\beta) \;:=\; \sum_{l=1}^L \beta_l\,g(w^{(l)})
  \qquad\text{on}\qquad
  \mathcal F \;:=\; (\Delta^{R-1})^{D\times L}\times\Delta^{L-1}.
\end{equation}
Call level $l$ \emph{present} if
$\beta_l>0$, and write $V_l:=g(w^{(l)})$ for its value. We prove in Appendix \ref{sec:escape_barrier_appendix} that the continuous optimization problem,\footnote{Our implementation parametrizes $w$ through softmax to enforce its simplex constraint with no fundamental differences to conclusions; $\beta$ is not parametrized or optimized at all, held fixed at the uniform average.}
\begin{equation}
    \min_{(w, \beta) \in \mathcal{F} } \,  F(w,\beta)
\end{equation}
is equivalent to the original MAX-CUT problem (Eq.~\ref{eq:max_cut_prob}); despite being nonconvex, every suboptimal point has zero escape barrier when $\beta$ is free to move\footnote{Can be strictly improved through a feasible path originating at this point}. In this case, the objective polynomial $p(x)$ is the max-cut objective of Eq.~\ref{eq:max_cut_prob}, and the particles' support is $\{-1,1\}$. Each $\mu_i^{(l)}$ is represented by $R$ atoms with softmax weights and $\tanh$-parameterized locations, making moment-matrix positivity guaranteed and leaving an unconstrained problem with $O(LnR)$ variables (linear in $n$ for fixed $L,R$, versus $O(n^2)$ for the GW SDP (Appendix~\ref{app:scaling})). We solve this problem over $w$ with L-BFGS~\citep{liu1989lbfgs} across a penalty-continuation schedule, holding $\beta$ fixed at the uniform average $\beta_l=1/L$ rather than optimizing it. \footnote{App.~\ref{app:fmgps_algorithm}, Prop.~\ref{prop:fixed_beta_bound} gives the weaker barrier bound this restriction leaves intact (details of the $w$-only solve in Alg.~\ref{alg:fmgps_maxcut})}



\subsection{Graph-coarsening initialization}
\label{sec:cmg}

At larger scales, finite-particle relaxation benefits from a strong initialization; we use
Combinatorial Multigrid (CMG)
~\citep{langston2019cmg,koutis2011combinatorial} to construct a
{\em structure-aware warm start}.  CMG builds a coarsen$\rightarrow$solve$\rightarrow$uncoarsen graph hierarchy:  at each level, vertex $i$ joins the cluster of
its strongest neighbor
$B[i]=\arg\max_{j\sim i} w_{ij}$ when
$M[i]/D[i]\geq1/8$, where
$M[i]=\max_{j\sim i}w_{ij}$ and
$D[i]=\sum_{j\sim i}w_{ij}$; otherwise it remains a singleton. We directly solve the
coarsest graph, and the solution is projected back to
the original graph with local-flip refinement at each level (total
cost $O(|E|\log n)$).

The resulting initialization is entirely graph-driven and requires no training data. Its value comes from exposing large-scale structure before continuous refinement. When edge weights are nearly uniform, however, little structure is available to exploit, the hierarchy compresses poorly, and the warm-start advantage diminishes. This behavior appears in the experiments and is consistent with the mechanism of the method.

\subsection{The composed pipeline}
\label{sec:pipeline}

CMG returns a binary warm start $x_{\mathrm{cmg}}$ that is already refined by
single-bit flips. From there, PPR searches over a
continuous particle representation that can make simultaneous multi-node
changes. In the stronger configuration, the best binary solution from pass
$p$ is used to initialize pass $p+1$. A single 2-opt~\citep{croes1958tsp} sweep is applied after
the final PPR pass (applying 2-opt after every
pass tends to lock the solution into a two-bit local basin too early
(Table~\ref{tab:t2g20})). For dense graphs with more than $3n$ edges, we first
apply the Spielman-Srivastava (S-S) effective-resistance sparsification
\citep{spielman2011}; this step is a no-op for the already-sparse structured
instances that dominate our results.

This is the pipeline we evaluate in Section~\ref{sec:results}: Spielman-Srivastava spectral
sparsification~\citep{spielman2011} (when needed) feeds a CMG warm start, which PPR refines on the
finest graph across one or more passes, closed by a single 2-opt sweep; an
optional discrete tail (Kernighan-Lin, 3-opt, restarts, and a memetic GA)
can be appended afterward. Figure~\ref{fig:pipeline_spine}
(Appendix~\ref{app:pipeline_end}) draws this end to end, and
Appendix~\ref{app:three_tools} summarizes how the three main stages
complement each other.

\section{Experiments}
\label{sec:results}

We evaluate our pipeline (Section~\ref{sec:pipeline} with details in Appendix~\ref{app:extended}) on structured toroidal Ising instances from the Biq Mac
library~\citep{wiegele2007biqmac} (branch-and-cut exact optima as ground
truth) and on real G-set instances up to $n=20{,}000$; ratios are
cut/optimum. We report $0.878$ as the poly.-time \emph{worst-case} guarantee~\citep{goemans1995} (not GW's typical achieved
cut, which is far higher on structured instances); where we do contrast
against an achieved GW cut we use a measured SDP-plus-rounding
solve.

\textbf{Structural warm start dominates raw search on the hard small
instance.} The $n{=}15$ instance shows that CMG finds the optimum in $30\,\mu$s, five orders of magnitude less time than the $21$ random PPR restarts consume, and the best of those restarts still reaches a cut of only $37$ (one short of the optimal $38$); none of the $21$ restarts finds the exact optimum. No additional restart budget closes the
gap. The instance is small, but it isolates the failure mode of random
initialization: once the solver enters the wrong basin, further local
optimization does not recover the optimum.

\textbf{Rank $R>1$ is necessary for the composed pipeline to improve on the
warm start.} Table~\ref{tab:particle_pipeline} reports the pipeline on two
toroidal families with published exact optima (of six tested; the other four
follow the same pattern), together with a dense expander contrast. The $R=1$
configuration barely moves off CMG; $R\geq2$ lifts every structured instance
$4$-$15$ points past CMG alone, well past the $0.878$ worst-case line. On
the expander \texttt{g05\_60}, by contrast, CMG is already at $0.96$ and
\emph{no} configuration improves it, as the dense-regime failure mode
described above predicts.

\begin{table}[htbp!]
\footnotesize\centering
\caption{\small S-S$\to$CMG$\to$PPR, mean cut/OPT ratio over three seeds vs. known optima~\citep{wiegele2007biqmac}; \texttt{g05\_60} here is a 5-instance subset of the family, vs.\ the full 10-instance mean in Table~\ref{tab:biqmac}. P-32,8 denotes $L=32,R=8$.}
\label{tab:particle_pipeline}
\setlength{\tabcolsep}{4pt}
\begin{tabular}{lrrrrc}\toprule
Family & $n$ & CMG & P-16,$R=1$ & \textbf{P-32,8} & Notes\\\midrule
\texttt{t3g5}  & 125 & 0.828 & 0.883 & \textbf{0.978} & 3-D toroidal\\
\texttt{t2g20} & 400 & 0.830 & 0.832 & \textbf{0.907} & 2-D toroidal\\\midrule
\multicolumn{4}{l}{\emph{Reference:}} & 0.878 & GW guarantee\\\midrule
\texttt{g05\_60} & 60 & 0.963 & 0.963 & 0.963 & Expander (5/10); CMG already strong\\
\bottomrule
\end{tabular}
\end{table}

\textbf{Pushed further, on the largest exact-optimum instance in this
comparison the pipeline
reaches $97.5\%$ mean and $98.4\%$ best.} \texttt{t2g20} ($n=400$) is the
largest instance in Table~\ref{tab:particle_pipeline} with a published exact
optimum, and
the weakest single-shot result above. Chaining five PPR passes lifts CMG's
$0.832$ to $0.952$; a single closing 2-opt sweep lifts this to
$\mathbf{0.975}$ mean, $\mathbf{0.984}$ best seed (Table~\ref{tab:t2g20} and
Figure~\ref{fig:t2g20_progression}, both in
Appendix~\ref{app:additional_results}).
Interleaving 2-opt after \emph{every} pass instead locks the solution into a
2-opt basin after pass~1, finishing at only $0.957$, so the ordering (search, then polish once) matters. Restart count saturates quickly
($k{=}40\to80\to160$ buys $+0.003$ then $+0.001$), confirming that the
quality of the warm-start basin, not restart breadth, is the bottleneck
(Figure~\ref{fig:pass_conv}).

\textbf{With a discrete polish tail, our pipeline reaches the published
optimum on all sixty Biq Mac instances.}
Chen et al.~\citep{chen2026neuralsdp} train a GNN to accelerate B\&B to a \emph{proven} optimum on $6$ Biq Mac families ($60$
instances). Our pipeline with a discrete tail (multi-pass PPR $\to$
2-opt~\citep{croes1958tsp} $\to$ Kernighan-Lin~\citep{kernighan1970efficient}
$\to$ 3-opt~\citep{lin1973heuristic} $\to$ $200$-restart $\to$ memetic
GA~\citep{moscato2004memetic,chu1998genetic}) matches the published optimum on \emph{every
one of the $60$} (Table~\ref{tab:biqmac}, Figure~\ref{fig:biqmac_neural}) in
$8.5$-$130.5$s per instance, against B\&B times up to
$40{,}930$s (Mosek SDP bound) or $4{,}509$s (learned proxy). While they prove optimality, we report a heuristic cut that
matches a published one.  However, it answers the comparison the learned solvers set up, with no training data or
inference hardware. The \texttt{g05} expanders also confirm
Table~\ref{tab:particle_pipeline}'s dense-regime pattern: warm start alone
already sits at $0.97$-$0.98$, so light PPR closes most of the
gap.

\begin{table}[htbp!]
\footnotesize\centering
\caption{\small $10$ instances chosen from the $6$ Biq Mac families~\citep{chen2026neuralsdp}; full pipeline matches all $60$  published optimum. Mean single-core per-instance wall clock used for timing comparisons; B\&B
  times from~\citep{chen2026neuralsdp}, Tab. $3$.}
\label{tab:biqmac}
\setlength{\tabcolsep}{5pt}
\begin{tabular}{lr rr r rr}\toprule
& & \multicolumn{3}{c}{Ours (heuristic)} & \multicolumn{2}{c}{\citep{chen2026neuralsdp} B\&B (s)}\\
\cmidrule(lr){3-5}\cmidrule(lr){6-7}
Family & $n$ & CMG & Pipeline & time\,(s) & Vanilla (s) & Neural (s)\\\midrule
\texttt{g05\_60}   & 60  & 0.975 & \textbf{1.000} & 23.9  & 53    & 9.5\\
\texttt{g05\_80}   & 80  & 0.977 & \textbf{1.000} & 63.8  & 1170  & 156\\
\texttt{g05\_100}  & 100 & 0.982 & \textbf{1.000} & 130.5 & 40930 & 4509\\
\texttt{pm1s\_80}  & 80  & 0.817 & \textbf{1.000} & 8.5   & 610   & 81\\
\texttt{pm1s\_100} & 100 & 0.818 & \textbf{1.000} & 13.7  & 10382 & 1058\\
\texttt{w01\_100}  & 100 & 0.881 & \textbf{1.000} & 12.7  & 11349 & 1074\\
\bottomrule
\end{tabular}
\end{table}

\textbf{At the continuous phase, L-BFGS beats Adam;
but, the discrete tail sets the final cut, not the optimizer.} PPR solves each
penalty-continuation stage with a single L-BFGS call; we swapped that inner
solve for Adam~\citep{kingma2014adam}, full-batch gradient descent (GD), and
a level-subsampled stochastic variant based on stochastic
approximation~\citep{robbins1951stochastic} on eight instances, with the warm
start, restarts, continuation schedule, and downstream discrete tail held
fixed (full per-instance table in Appendix~\ref{app:optimizer}). Pre-tail,
the four disagree by up to $7.4\%$\footnote{L-BFGS's curvature edge leaves
Adam up to $6.2\%$ behind, and GD, with neither momentum nor curvature,
stalls at the CMG warm start on \texttt{G1} and is $11$-$30\times$ slower
per instance.}, but the identical discrete tail collapses that spread to
$\leq0.7\%$ on every instance: L-BFGS just gets there the cheapest.

\textbf{The warm start's advantage persists to n=20{,}000.}
A dense coupling matrix requires $8n^2$ bytes ($3.2$\,GB at $n=20{,}000$), impractical for sparse-signed graphs ($|E|\!\approx\!2n$). CSR
representations storing only $2|E|$ nonzeros resolve this. Running
CMG $\to$ light PPR $\to$ 2-opt on $6$ real G-set instances,
$n=5K$--$20K$ (Table~\ref{tab:sparse}), reduces memory by
$833$--$3333\times$ with linear scaling in $n$. Gains over CMG warm
start through the continuous-relaxation path alone,
$\Delta_{\mathrm{2opt}}$, remain in a tight $1.19$--$1.21\times$ band,
matching the behavior at $n=125$--$1000$
(Figure~\ref{fig:sparse_scaling}). Adding a discrete
tail\footnote{Kernighan-Lin, 3-opt, a memetic GA, and an iterated local
search: the retuned configuration of Table~\ref{tab:g81_stages}.}
raises the final gain to $\Delta_{\mathrm{final}}=1.32$--$1.33\times$,
again with no observable decay. For \texttt{G77} and \texttt{G81}, the full pipeline reaches $\mathbf{98.7\%}$ and
$\mathbf{99.0\%}$ (the continuous-relaxation path alone
reaches $90.8\%$ and $89.8\%$) of the externally verified reported optima (Cosm's~\citep{zick2026cosm} $9940$ and $14060$). A tabu-diversified
tail~\citep{glover1986tabu,glover1997tabu} reaches $13962$
($99.3\%$ of $14060$) on \texttt{G81} in $\sim$213\,s and provides the
study's only internal optimality certificate, verified in a further
$\sim$2.4\,s across $40$ connected cores
(Appendix~\ref{app:g81cert}).

\begin{table}[htbp!]
\footnotesize\centering
\caption{\small Sparse CSR path, four of six G-set instances, single core.
  ``mem'': dense/CSR storage ratio. ``2opt'': cut after CMG$\to$PPR$\to$2-opt
  only, with its own gain over CMG, $\Delta_{\mathrm{2opt}}$ (this is the
  configuration the pipeline name describes). ``final'': cut after also
  appending the retuned discrete tail of Table~\ref{tab:g81_stages}
  (KL/3-opt/GA/ILS), with $\Delta_{\mathrm{final}}$. ratio-to-OPT given at
  both stages, only where an externally verified optimum exists.}
\label{tab:sparse}
\setlength{\tabcolsep}{3pt}
\begin{tabular}{lrr rrr rrr}\toprule
Instance & $n$ & mem & cmg & 2opt & $\Delta_{\mathrm{2opt}}$ & final & $\Delta_{\mathrm{final}}$ & ratio-to-OPT (2opt/final)\\\midrule
\texttt{G57} & 5000  & 833.3$\times$  & 2600  & 3142  & 1.208 & 3460  & 1.331 & -- \\
\texttt{G67} & 10000 & 1666.7$\times$ & 5198  & 6258  & 1.204 & 6872  & 1.322 & -- \\
\texttt{G77} & 14000 & 2333.3$\times$ & 7454  & 9022  & 1.211 & 9810  & 1.316 & 0.908 / \textbf{0.9869} \\
\texttt{G81} & 20000 & 3333.3$\times$ & 10574 & 12626 & 1.194 & 13922 & 1.317 & 0.898 / \textbf{0.9902} \\
\bottomrule
\end{tabular}
\end{table}

\section{Discussion: comparison and limitations}
\label{sec:comparison}

\textbf{Comparable assumptions to bespoke heuristics.}
Cosm~\citep{zick2026cosm} reports optimality on \texttt{G77} and
\texttt{G81} using an external exact solver rather than an internal proof.
Our ratio-to-optimum results rely on the same type of reference. We do not
aim to match Cosm's runtime performance; our contribution is a
general framework that combines reusable structure-aware components (nonconvex relaxation and spectral coarsening) with a broader class of
constraints rather than a solver tailored to MAX-CUT.

\textbf{When the method provides little benefit.}
On dense, well-conditioned graphs, CMG coarsening succeeds almost everywhere, so the warm
start is already near the local-flip limit and PPR yields no measurable
improvement (Table~\ref{tab:particle_pipeline}). On uniform-weight
Erd\H{o}s-R\'enyi graphs, the CMG grouping criterion
$M[i]/D[i]\geq1/8$ rarely holds once $n>16$, preventing meaningful
coarsening. In both cases, the pipeline exploits
weight heterogeneity, so these graph families provide little structure to
exploit. This behavior is consistent with our design goal of a
{\em structure-aware method}.

\textbf{Open questions.}
A discrete polishing stage (Kernighan--Lin, 3-opt, or a memetic GA) closes
most of the remaining gap on the hardest instances. However, a
random-start baseline with the same polishing budget matches or exceeds the
CMG-seeded pipeline on roughly half of these cases. Once sufficient
restarts are available, final solution quality is often determined more by
the polishing stage than by the warm start
(Table~\ref{tab:tail_ablation}; see also
Table~\ref{tab:t2g20_tail}). Understanding this trade-off and extending the
approach beyond $n=20{,}000$ under fixed compute remain promising
directions.

\section{Conclusion}
A structure-aware CMG warm start combined with a particle-based continuous
relaxation achieves near-optimal MAX-CUT solutions across a wide range of
problem sizes. The approach reaches $97.5\%$ mean (best-seed $98.4\%$) of
the exact optimum on \texttt{t2g20} ($n=400$), and $98.7$--$99.0\%$ of
the exact optimum on the two largest G-set instances with externally
verified optima. These gains persist up to $20{,}000$ vertices without
observable degradation, although they naturally diminish on graph families
with little exploitable structure. Appendix~\ref{sec:escape_barrier_appendix} discusses the theoretical motivation that continuous relaxation eliminates escape barriers under free mixing, while
the CMG warm start escapes the limitations of purely local search. In compute-constrained
regimes, this combination provides a robust advantage; however, when large
discrete-search budgets become available, the polishing stage increasingly
determines final solution quality. Consistent with the tail-ablation study
(Table~\ref{tab:tail_ablation}), a random start carried through the same
high-budget discrete tail can match or exceed the CMG-seeded pipeline on
roughly half of the hardest instances tested.

\bibliography{biblio}

\newpage
\appendix

\section{Particle Polynomial Relaxation (PPR) for MAX-CUT: full algorithm}
\label{app:fmgps_algorithm}

Algorithm~\ref{alg:fmgps_maxcut} specializes the particle-parameterization
solver of~\citet{letourneau2023polyopt} to the MAX-CUT setting. The only
constraint is the binary/box penalty
$\tfrac{\rho}{2}\|\mathrm{mean}_l(a2_l)-\mathbf1\|^2$ appearing in the
objective below, which drives each marginal's second moment to one and is
therefore zero exactly when every marginal is supported on $\{-1,+1\}$. There is
no equality-constraint dual loop; instead, $K$ independent restarts are run
through a fixed penalty-continuation schedule $\rho_1<\cdots<\rho_S$.  Pseudocode for the rest of the pipeline is presented in Appendix~\ref{app:component_algorithms}.

\begin{algorithm2e}[htbp!]
\small
\SetAlgoLined
\DontPrintSemicolon
\caption{\small PPR for MAX-CUT}
\label{alg:fmgps_maxcut}
\KwIn{$J$ ($n\times n$, normalized); warm start $x_0$; $L,R,k$; sat;
  penalty schedule $\rho_1<\cdots<\rho_S$}
\KwOut{best $\pm1$ cut found}
$\mathrm{best}\gets-\infty$\;
\For{restart $=1,\ldots,k$}{
  $u_{l,0,i} \gets \operatorname{atanh}(\mathrm{sat}\cdot x_{0,i}) + \mathcal{N}(0,0.02)$ for all $l$
    \tcp*{warm-started slot}
  all other $u_{l,r,i}\sim\mathcal{N}(0,0.02)$ \tcp*{unwarmed particles}
  \For{$\rho = \rho_1,\ldots,\rho_S$}{
    form marginals $a_l = \sum_r w_r\, \tanh(u_{l,r,\cdot})$,
      $a2_l = \sum_r w_r\, \tanh(u_{l,r,\cdot})^2$\;
    $F(u) \gets -\dfrac{1}{2L}\sum_l a_l^\top J a_l
      + \dfrac{\rho}{2}\big\|\mathrm{mean}_l(a2_l) - \mathbf{1}\big\|^2$\;
    run L-BFGS on $F(u)$ to convergence\;
  }
  \For{each $(l,r)$}{
    $\hat{x}_i \gets \mathrm{sign}(\tanh(u_{l,r,i}))$ for all $i$\;
    local-flip refine $\hat{x}$ on $G$\;
    keep $\hat{x}$ if $\mathrm{Cut}(\hat{x}) > \mathrm{best}$\;
  }
}
\Return best $\hat{x}$ found across all restarts and $(l,r)$\;
\end{algorithm2e}

\subsection{Theory and Proofs}
\label{sec:escape_barrier_appendix}
In this appendix, we demonstrate that the proposed continuous reformulation for the MAX-CUT problem possesses an advantageous optimization landscape. Namely, we show that for every non-optimal feasible point, there exists a path of descent (not necessarily strict). In other words, the reformulated objective does not possess any strict local minima other then global minima (which may or may not be strict either). The arguments are processed without the softmax parametrization for simplicity. It can be shown that the results still hold following such parametrization.

\paragraph{Weights, levels, and the objective}
Recall that \emph{level-$l$ discrete product measure} is
\begin{equation}
  \label{eq:measure}
  \mu^{(l)}(\cdot) \;=\; \bigotimes_{i=1}^D
    \Bigl(\sum_{r=1}^R w_i^{(l,r)}\,\delta_{\xbar_i^{(r)}}\Bigr),
  \qquad
  w_i^{(l,\cdot)}\in\Delta^{R-1},
\end{equation}
where $\Delta^{R-1}:=\{w\in\Rz_{\ge0}^R:\sum_rw_r=1\}$ is the
probability simplex. Its degree-$n$ marginal moment in coordinate $i$ is
\begin{equation}
  \label{eq:moment}
  a_{n,i}^{(l)} \;:=\; \sum_{r=1}^R w_i^{(l,r)}\,T_n\bigl(\xbar_i^{(r)}\bigr),
\end{equation}
so $a_{0,i}^{(l)}\equiv1$ is pinned by the simplex constraint. Define the
\emph{single-level functional}
\begin{equation}
  \label{eq:g}
  g(w) \;:=\; \Ex_{\mu(w)}[p(x)]
       \;=\; \sum_{\mathbf n\in\mathrm{supp}(p)} p_{\mathbf n}
             \prod_{i=1}^D a_{n_i,i}(w_i),
  \qquad w\in(\Delta^{R-1})^D,
\end{equation}
and, for $L\ge1$ levels with a \emph{free} mixing weight
$\beta\in\Delta^{L-1}$, the objective
\begin{equation}
  \label{eq:F}
  F(w,\beta) \;:=\; \sum_{l=1}^L \beta_l\,g(w^{(l)})
  \qquad\text{on}\qquad
  \mathcal F \;:=\; (\Delta^{R-1})^{D\times L}\times\Delta^{L-1}.
\end{equation}
Call level $l$ \emph{present} if
$\beta_l>0$, and write $V_l:=g(w^{(l)})$ for its value.


\paragraph{Oscillation}
Let
\begin{equation}
  \label{eq:oscillation}
  C \;:=\; \max_{\xbar\in\Xbar}p(\xbar) - \min_{\xbar\in\Xbar}p(\xbar)
     \;=\; \max_{\xbar\in\Xbar}p(\xbar) - p^*
\end{equation}
be the (finite) oscillation of $p$ over the grid. Since $g(w)$
in~\eqref{eq:g} is an expectation of $p$ under a probability measure
supported on $\Xbar$, every feasible $w$ satisfies
$p^*\le g(w)\le p^*+C$, hence
\begin{equation}
  \label{eq:oscillation_bound}
  |g(w)-g(w')| \;\le\; C
  \qquad\text{for all feasible } w,w'.
\end{equation}

\begin{definition}[Escape barrier]
\label{def:escape_barrier}
For $L\ge1$ let $F$ and $\mathcal F$ be as in~\eqref{eq:F}. The
\emph{escape barrier} at a feasible point $(w,\beta)$ is
\begin{equation}
  \label{eq:escape_barrier}
  B(w,\beta) := \inf_{\pi}\ \Bigl[\ \max_{t\in[0,1]}F(\pi(t))
    \;-\; F(\pi(0))\ \Bigr],
\end{equation}
the infimum taken over all continuous feasible paths
$\pi:[0,1]\to\mathcal F$ with $\pi(0)=(w,\beta)$ and
$F(\pi(1))<F(\pi(0))$ (so $B:=+\infty$ when no such path exists). Note
$B\ge0$ always, since $t=0$ is admissible in the maximum.
\end{definition}


The barrier dichotomy needs one elementary fact in both of its
directions, so we record it first.

\begin{theorem}[Value equivalence]
\label{thm:equivalence}
$\displaystyle \pi^* := \min_{(w,\beta)\in\mathcal F} F(w,\beta) = p^*$.
\end{theorem}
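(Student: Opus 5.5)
The plan is to prove the two inequalities $\pi^*\ge p^*$ and $\pi^*\le p^*$ separately. We also check that the minimum is attained, so that writing $\min$ rather than $\inf$ is justified. Attainment is immediate: $\mathcal F$ is a finite product of simplices, hence compact, and $F$ is a polynomial in $(w,\beta)$, hence continuous. Throughout, $\Xbar$ denotes the finite grid of particle locations $\{(\xbar_1^{(r_1)},\dots,\xbar_D^{(r_D)})\}$, and $p^*=\min_{\xbar\in\Xbar}p(\xbar)$, as in \eqref{eq:oscillation}. In the MAX-CUT specialization the particle support is $\{-1,1\}$, so $\Xbar=\{-1,+1\}^D$ and $p^*$ is the optimal value of \eqref{eq:max_cut_prob}.

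\textbf{Lower bound.} First we confirm that the product formula in \eqref{eq:g} really is an expectation. Expanding $p=\sum_{\mathbf n}p_{\mathbf n}T_{\mathbf n}$ and using that $\mu^{(l)}$ is a product measure, Fubini gives $\int T_{\mathbf n}\,d\mu^{(l)}=\prod_i a^{(l)}_{n_i,i}$. Hence $g(w^{(l)})=\Ex_{\mu^{(l)}}[p]$, where $\mu^{(l)}$ is a probability measure supported on $\Xbar$. This is exactly the observation recorded before \eqref{eq:oscillation_bound}, which gives $g(w^{(l)})\ge p^*$ for every level. Since $\beta\in\Delta^{L-1}$, $F(w,\beta)=\sum_l\beta_l g(w^{(l)})$ is a convex combination of numbers that are each at least $p^*$. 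Therefore $F\ge p^*$ on all of $\mathcal F$, and so $\pi^*\ge p^*$.

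\textbf{Upper bound.} Pick a minimizer $\xbar^\star\in\Xbar$ of $p$; it exists because $\Xbar$ is finite. Write $\xbar^\star_i=\xbar_i^{(r_i^\star)}$. For every level $l$, set $w_i^{(l,r)}=\one[r=r_i^\star]$, a vertex of the simplex, and take any $\beta\in\Delta^{L-1}$, for instance $\beta=e_1$. Then each $\mu^{(l)}=\delta_{\xbar^\star}$, so $a^{(l)}_{n,i}=T_n(\xbar^\star_i)$ and $g(w^{(l)})=\sum_{\mathbf n}p_{\mathbf n}T_{\mathbf n}(\xbar^\star)=p(\xbar^\star)=p^*$. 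It follows that $F(w,\beta)=p^*$, giving $\pi^*\le p^*$, and this point is a minimizer.

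\textbf{Main obstacle.} There is no real analytic difficulty here. The only point needing care is the identification of the moment-product formula \eqref{eq:g} with $\Ex_\mu[p]$, which relies on the product structure and on the basis expansion of $p$. A second point is making explicit that $p^*$ is the minimum over the grid $\Xbar$. Because of that, the equivalence with \eqref{eq:max_cut_prob} requires $\Xbar=\{-1,1\}^D$, i.e.\ each coordinate's particles sit only at $\pm1$. With that convention fixed, both directions are one line each.
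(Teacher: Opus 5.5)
Your proof is correct and follows the paper's argument. The lower bound uses that each $g(w^{(l)})$ is an expectation of $p$ under a probability measure supported on $\Xbar$, so every convex combination over $\beta$ stays $\ge p^*$; the upper bound uses one-hot weights at a grid minimizer $\xbar^\star$, and the differences are cosmetic (you make every level one-hot, so any $\beta$ attains $p^*$, whereas the paper uses a single level with $\beta_1=1$, and your compactness remark is redundant since the explicit construction already gives attainment).
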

\begin{proof}
\noindent\textbf{Step 1 ($\pi^*\le p^*$).}
Let $\xbar^*\in\Xbar$ be a global minimizer of $p$ over the grid (which
exists since $\Xbar$ is finite). For each $i$ let $r_i^*$ be the index
with $\xbar_i^{(r_i^*)}=\xbar_i^*$, and set
$w_i^{(1,r)}:=\one[r=r_i^*]$ (one-hot), with $\beta_1=1$. This is
feasible, and $F$ evaluates exactly to $p(\xbar^*)=p^*$ term by term.
Hence $\pi^*\le p^*$.

\noindent\textbf{Step 2 ($\pi^*\ge p^*$).}
For \emph{any} feasible $w$, each level's contribution
$\Ex_{\mu^{(l)}(w)}[p]$ is, by definition~\eqref{eq:measure}, the
expectation of $p$ under a genuine probability measure supported on
$\Xbar$, a convex combination of point masses at grid points, since
$\mu^{(l)}$ is a product of finitely supported marginals. Every such
expectation is $\ge\min_{\xbar\in\Xbar}p(\xbar)=p^*$. Any convex
combination of these expectations\footnote{In particular, a free
$\beta\in\Delta^{L-1}$, not merely a uniform average} preserves the
bound, so $F(w,\beta)\ge p^*$ for every feasible $(w,\beta)$, giving
$\pi^*\ge p^*$.

Combining, $\pi^*=p^*$.
\end{proof}

\begin{theorem}[Lateral-escape: zero barrier at every suboptimal
  point]
\label{prop:simplex_descent}
Let $L\ge2$ and let $(w,\beta)\in\mathcal F$ satisfy $F(w,\beta)>p^*$.
Then for every $\delta\in(0,1]$ there is a continuous feasible path
$\pi:[0,1]\to\mathcal F$ with $\pi(0)=(w,\beta)$ such that
\begin{equation}
  \label{eq:lateral_bound}
  F(\pi(1)) = p^*
  \qquad\text{and}\qquad
  \max_{t\in[0,1]}F(\pi(t)) \;\le\; F(w,\beta) + \delta\,C .
\end{equation}
Consequently $B(w,\beta)=0$: the net descent $F(w,\beta)-p^*>0$ is a
fixed positive quantity independent of $\delta$, while the barrier that
must be crossed to realize it is at most $\delta C$ and hence
arbitrarily small. Conversely, if $F(w,\beta)=p^*$ then $F\ge p^*$
everywhere on $\mathcal F$ (Step~2 of Theorem~\ref{thm:equivalence})
rules out any feasible path reaching a lower value, so
$B(w,\beta)=+\infty$. Together the two directions give the full
dichotomy: $B(w,\beta)=0$ iff $F(w,\beta)>p^*$, and $B(w,\beta)=+\infty$
if and only if $F(w,\beta)=p^*$, with no intermediate case (no suboptimal point
is protected by a barrier of positive height.
\end{theorem}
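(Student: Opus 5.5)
The path will be built in three phases, using $L\ge2$ and the fact that $\beta$ is free. Fix $\delta\in(0,1]$ and $(w,\beta)$ with $F(w,\beta)>p^*$. The idea is to let one level $k$ carry almost all the mass, so that rewriting any other level costs at most a $\delta$-fraction of the oscillation. That other level can then be rewritten to the optimal one-hot configuration from Step~1 of Theorem~\ref{thm:equivalence}, and finally given all the mass.

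\textbf{Phase 1 (concentrate mass).} Pick a present level $k$ whose value is minimal among present levels, i.e.\ $V_k\le V_l$ for every $l$ with $\beta_l>0$. Then $V_k\le F(w,\beta)$, since $F$ is a convex combination of the present $V_l$. Move $\beta$ linearly toward the vertex $e_k$ while keeping $w$ fixed. Along this segment $F$ is affine in $t$, going from $F(w,\beta)$ to $V_k$, so it never exceeds $F(w,\beta)$. Stop at $\beta'=(1-\delta)e_k+\delta\beta$ if you want a small positive weight on the other levels, or go all the way to $e_k$; I will go all the way. The endpoint has value $V_k\le F(w,\beta)$.

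\textbf{Phase 2 (rewrite an idle level).} Let $m\ne k$, which exists because $L\ge2$. First move $\beta$ from $e_k$ to $(1-\delta)e_k+\delta e_m$. On this segment $F=(1-s)V_k+sV_m\le V_k+sC$ by \eqref{eq:oscillation_bound}, so $F\le F(w,\beta)+\delta C$.

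Actually, it is cleaner to rewrite $w^{(m)}$ while $\beta_m=0$. Level $m$ is then absent, so interpolating $w^{(m)}$ linearly (this stays inside the simplex product) to the one-hot minimizer $w^*$ leaves $F\equiv V_k$ unchanged. Only after that do we raise $\beta_m$.

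\textbf{Phase 3 (transfer mass).} Move $\beta$ linearly from $e_k$ to $e_m$. Now $F=(1-s)V_k+s\,p^*$, which is nonincreasing and ends at $p^*$.

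Concatenating and reparametrizing the three phases gives a continuous path with $\max F\le F(w,\beta)$, which is $\le F(w,\beta)+\delta C$ trivially, and $F(\pi(1))=p^*$. Since $F(w,\beta)-p^*>0$ is fixed, this yields $B(w,\beta)=0$. In fact we get $\max_t F(\pi(t))-F(\pi(0))=0$, so the infimum is attained and $\delta$ plays no essential role. The converse direction follows directly from Step~2 of Theorem~\ref{thm:equivalence}: no feasible point has value below $p^*$, so the admissible path set is empty and $B=+\infty$.

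\textbf{Main obstacle.} The one point to check is that Phase 2 leaves $F$ genuinely unchanged. This needs level $m$ to be exactly absent, which is exactly why Phase 1 drives $\beta$ all the way to the vertex $e_k$. If the argument is instead required to keep every $\beta_l>0$, I would stop Phase 1 at weight $\delta$ on $m$. Then \eqref{eq:oscillation_bound} bounds the change during Phase 2 by $\delta C$, and that variant is where the $\delta C$ slack in \eqref{eq:lateral_bound} would be used.
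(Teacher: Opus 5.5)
Your proof is correct, and it takes a different and somewhat stronger route than the paper's. The paper splits into two cases. If some level is absent, it does exactly your Phases~2--3. If every level is present, it picks a level $l_1$ of \emph{maximal} value and shrinks $\beta_{l_1}$ only down to $\delta'=\min(\delta,\beta_{l_1})$; this is non-increasing because mass leaves a maximal level. It then repositions $w^{(l_1)}$ to the one-hot minimizer at a cost of at most $\delta' C$ via \eqref{eq:oscillation_bound}, and finally transfers the mass back to $l_1$. So in that case $B=0$ follows only in the limit $\delta\downarrow0$.

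You instead move all the mass onto a present level of \emph{minimal} value. Along that segment $F$ is affine from $F(w,\beta)$ to $V_k\le F(w,\beta)$, so it never rises, and every other level becomes absent at once. That reduces every point to the paper's Case~1 with no case split. The gain is a path along which $F$ never exceeds $F(w,\beta)$. The infimum defining $B$ is therefore attained, the $\delta C$ slack in \eqref{eq:lateral_bound} is never needed, and you get a non-increasing path to $p^*$ from every feasible point. Your argument also shows that stopping the emptying at $\delta'>0$ is inessential: the paper's own Phase~A could drive $\beta_{l_1}$ all the way to $0$ for the same reason. The paper's version, which keeps the repositioned level at a small positive weight, mirrors the fixed-$\beta$ argument of Proposition~\ref{prop:fixed_beta_bound}. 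There that weight is stuck at $1/L$ and the ascent cannot be avoided, which likely explains the phrasing.

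One cleanup: delete the abandoned first sentence of your Phase~2, which moves $\beta$ to $(1-\delta)e_k+\delta e_m$ before the rewrite. If executed, it would make the rewrite of $w^{(m)}$ no longer flat. The version you actually concatenate, rewriting at $\beta=e_k$, is the right one.
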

\begin{proof}
Let $\xbar^*\in\Xbar$ be a global minimizer of $p$ over the grid and let
$e(\xbar^*)\in(\Delta^{R-1})^D$ be the corresponding one-hot weight
vector, so $g(e(\xbar^*))=p(\xbar^*)=p^*$ (this is the only place a
specific weight vector is needed, and it requires nothing of $R$ beyond
$\xbar^*\in\Xbar$; no Vandermonde or rank hypothesis enters). Write
$V_l:=g(w^{(l)})$ and call level $l$ \emph{present} if $\beta_l>0$. Two
cases.

\smallskip
\noindent\textbf{Case 1: some level $l_1$ is absent
($\beta_{l_1}=0$).}
\emph{Phase A (exactly flat).} Holding $\beta$ and every other level
fixed, move $w^{(l_1)}$ to $e(\xbar^*)$ along the straight line
$(1-t)w^{(l_1)}+t\,e(\xbar^*)$, feasible for all $t$ by convexity of
each $\Delta^{R-1}$. Since $F$ depends on $w^{(l_1)}$ only through the
product $\beta_{l_1}g(w^{(l_1)})$ and $\beta_{l_1}=0$, $F$ is
\emph{constant} along the whole of Phase~A: the rise is exactly $0$.
\emph{Phase B (exactly affine, strictly decreasing).} Now move
$\beta_{l_1}$ from $0$ to $1$, taking the removed mass from the other
levels proportionally, i.e.\ $\beta_{l_1}(t)=t$ and
$\beta_l(t)=(1-t)\beta_l$ for $l\neq l_1$. This stays in $\Delta^{L-1}$,
and since every $V_l$ is now fixed,
$F(t)=(1-t)F(w,\beta)+t\,p^*$ is exactly affine with constant slope
$p^*-F(w,\beta)<0$: strictly decreasing, ending at $p^*$. The path never
rises at all, so~\eqref{eq:lateral_bound} holds with room to spare, for
every $\delta$.

\smallskip
\noindent\textbf{Case 2: every level is present ($\beta_l>0$ for all
$l$).} Fix $\delta\in(0,1]$, choose $l_1\in\arg\max_l V_l$ (so
$V_{l_1}\ge\sum_l\beta_lV_l=F(w,\beta)$), and set
$\delta':=\min(\delta,\beta_{l_1})>0$.

\emph{Phase A (lateral: non-increasing, and exactly flat at a critical
point).} Holding all $w^{(l)}$ fixed, decrease $\beta_{l_1}$ from
$\beta_{l_1}$ to $\delta'$, redistributing the removed mass
$\beta_{l_1}-\delta'\ge0$ proportionally among the other levels:
$\beta_l(t)$ interpolates affinely, remaining in $\Delta^{L-1}$
throughout. Mass is thereby moved \emph{out of} a level of maximal value
$V_{l_1}$ and \emph{into} a convex combination of levels each of value
$\le V_{l_1}$, so $F$ is non-increasing along Phase~A; the rise is
$\le0$.\footnote{When all present levels share a common value, exactly the
situation forced at any critical point by Step~2 of
Theorem~\ref{thm:equivalence}, Phase~A is
\emph{identically flat}: it is a genuine lateral move, neither ascending
nor descending, and it is available regardless of how deeply each
individual level is trapped.}

\emph{Phase B (the only ascent, bounded by $\delta'C$).} With $\beta$
now fixed at $\beta_{l_1}=\delta'$, move $w^{(l_1)}$ to $e(\xbar^*)$
along the straight line as in Case~1. Only the term
$\beta_{l_1}g(w^{(l_1)})=\delta'g(w^{(l_1)})$ varies, so for every $t$
\begin{equation}
  F(\pi(t)) - F(\pi(t_A))
    = \delta'\bigl[g(w^{(l_1)}(t)) - V_{l_1}\bigr]
    \;\le\; \delta' C \;\le\; \delta C ,
\end{equation}
where $t_A$ is the end of Phase~A and the bound uses
$|g(w)-g(w')|\le C$ from~\eqref{eq:oscillation_bound}. Since Phase~A did
not raise $F$, the running maximum over Phases~A-B is at most
$F(w,\beta)+\delta C$.

\emph{Phase C (exactly affine, strictly decreasing).} Now $V_{l_1}=p^*$
and all $V_l$ are again fixed; move $\beta_{l_1}$ from $\delta'$ to $1$,
taking mass from the others proportionally as in Case~1's Phase~B. $F$
is exactly affine in the interpolation parameter with endpoint value
$p^*$, hence non-increasing (strictly decreasing unless the starting
value already equals $p^*$, in which case the endpoint is reached
immediately), and $F(\pi(1))=p^*$. Phase~C therefore adds no new
maximum, and~\eqref{eq:lateral_bound} holds.

\smallskip
In both cases the path satisfies~\eqref{eq:lateral_bound}, and
$F(\pi(1))=p^*<F(w,\beta)$ makes $\pi$ admissible in
Definition~\ref{def:escape_barrier}; letting $\delta\downarrow0$ gives
$B(w,\beta)=0$. The converse direction stated above needs no further
argument beyond $F\ge p^*$ everywhere.
\end{proof}

The zero-barrier statement is deliberately \emph{weaker} than ``local
minimum $\Rightarrow$ global minimum,'' and the reason is not an
artifact of the fixed grid. The stronger statement would require a path
whose derivative is strictly negative \emph{immediately} at $t=0$, hence
a mechanism for spawning fresh mass at $\xbar^*$ inside a level that
already carries mass without first emptying it. Here the level to be
repositioned is first emptied to mass $\delta'$ (otherwise
repositioning it would cost $O(1)$ rather than $O(\delta)$), and the
emptying is lateral, not descending. The result is a path that descends
overall but not from the outset, which is exactly the distinction
between $B=0$ (proved here) and ``not a local minimum'' (false here). The operative consequence
survives: $p^*$ is reachable from anywhere at arbitrarily small cost, so
a solver willing to accept a transient increase of size $\delta C$ is
never blocked by a genuine energy barrier, only by the difficulty of
finding the lateral direction.

\paragraph{What this buys the deployed algorithm.} Theorem
\ref{prop:simplex_descent}'s mechanism needs $\beta$ to move: Case~1 needs an
\emph{absent} level ($\beta_{l_1}=0$) and Case~2 needs $\beta_{l_1}$
\emph{shrunk} to a small $\delta'$ before the repositioning move. Algorithm
\ref{alg:fmgps_maxcut}, and every solver run in this paper, never varies
$\beta$ at all: it always averages all $L$ levels uniformly,
$\beta_l=1/L$. Eq.~\eqref{eq:F} is stated for a general, possibly free
$\beta\in\Delta^{L-1}$ because the construction and Theorem
\ref{thm:equivalence} hold at that generality (Step~2's bound is proved for
\emph{any} convex combination, uniform included), but Theorem
\ref{prop:simplex_descent}'s zero barrier is specific to a $\beta$ that is
free to move, and so does not, by itself, describe what Algorithm
\ref{alg:fmgps_maxcut} is guaranteed. The following proposition states what
survives when $\beta$ is pinned at the uniform average.

\begin{proposition}[Barrier bound under the fixed uniform weighting]
\label{prop:fixed_beta_bound}
Let $\beta_l=1/L$ for every $l$, held fixed, and let $w$ be feasible with
$F(w,\tfrac1L\mathbf1)>p^*$. Define the \emph{fixed-$\beta$ escape barrier}
$B_{\mathrm{fix}}(w)$ exactly as in Definition~\ref{def:escape_barrier}, but
with the infimum restricted to continuous paths that vary only $w$ (as
Algorithm~\ref{alg:fmgps_maxcut} does; $\beta$ is not a free coordinate of
its search). Then
\[
  B_{\mathrm{fix}}(w) \;\le\; C/L .
\]
\end{proposition}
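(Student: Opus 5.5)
With $\beta$ pinned at $\tfrac1L\mathbf1$, I would build an explicit path that varies only $w$. It moves one level at a time, by a straight line, to the one-hot vector $e(\xbar^*)$ already used in Theorem~\ref{prop:simplex_descent}. Under fixed $\beta$ we have $F(w)=\tfrac1L\sum_l V_l$. Moving only $w^{(l)}$ along $(1-t)w^{(l)}+t\,e(\xbar^*)$ stays feasible by convexity of each $\Delta^{R-1}$. Along that segment only the term $\tfrac1L g(w^{(l)}(t))$ changes, so by~\eqref{eq:oscillation_bound} $F$ never exceeds its value at the start of the segment by more than $C/L$.

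The key steps are as follows. First, order the levels so that the ones with the largest values $V_l$ are moved first; the first segment uses $l_1\in\arg\max_l V_l$. Second, concatenate the $L$ segments, each sending some $w^{(l)}$ to $e(\xbar^*)$. After segment $k$ the value equals $\tfrac1L\bigl(k\,p^*+\sum_{\text{remaining}}V_l\bigr)$, and the endpoint value is exactly $p^*<F(w,\tfrac1L\mathbf1)$, so the path is admissible in the fixed-$\beta$ version of Definition~\ref{def:escape_barrier}. Third, bound the running maximum. On segment $k$ the value is at most the value at the start of that segment plus $C/L$. I claim the starting values $F_k$ of the segments are non-increasing. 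Indeed, $F_{k+1}-F_k=\tfrac1L(p^*-V_{l_k})\le 0$, since $V_l\ge p^*$ for every level by Step~2 of Theorem~\ref{thm:equivalence}. Hence every $F_k\le F_0=F(w,\tfrac1L\mathbf1)$, so the running maximum over the whole path is at most $F(w,\tfrac1L\mathbf1)+C/L$, which gives $B_{\mathrm{fix}}(w)\le C/L$. In fact this monotonicity holds in any order, so the ordering in the first step is only cosmetic.

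The main obstacle is making the per-segment rise genuinely at most $C/L$ rather than $C$. This rests entirely on~\eqref{eq:oscillation_bound} applied to the single varying level: $g(w^{(l)}(t))-V_l\le (p^*+C)-p^*=C$, weighted by $\beta_l=1/L$. I would check that no cross-level coupling exists, i.e.\ that $F$ is separable across levels when $\beta$ is fixed, which is immediate from~\eqref{eq:F}.

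A possible sharpening, not needed for the statement, is that $g$ is multilinear in the per-coordinate weights $w_i^{(l)}$. One could therefore move coordinates one by one and move the worst level first, but the crude bound already suffices. Finally, if $L=1$ the bound reads $B_{\mathrm{fix}}\le C$, which is trivially consistent with the same one-segment construction.
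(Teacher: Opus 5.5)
Your proof is correct. Its core mechanism is the paper's: freeze $\beta$ and move one level along the straight line to $e(\xbar^*)$. Then only $\tfrac1L g(w^{(l)})$ varies, so the oscillation bound caps the rise at $C/L$.

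The difference is where the path stops. The paper's proof is exactly your first segment and nothing more. It takes $l_1\in\arg\max_l V_l$, so $V_{l_1}\ge F(w,\tfrac1L\mathbf 1)>p^*$, and that one segment already ends strictly lower, at $F(w,\tfrac1L\mathbf 1)-\tfrac1L(V_{l_1}-p^*)$. In that one-segment version the choice of level is not cosmetic: a level with $V_l=p^*$ would give no decrease and the path would not be admissible. The argmax choice is what guarantees $V_{l_1}>p^*$.

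You instead concatenate all $L$ segments. This costs the extra monotonicity check on segment starting values, which you justify correctly from $V_l\ge p^*$. In return, the ordering becomes irrelevant, and you prove slightly more. Under the fixed uniform weighting, the global value $p^*$ itself is reachable from any suboptimal point with running maximum at most $F(w,\tfrac1L\mathbf 1)+C/L$. This is the exact fixed-$\beta$ analogue of the conclusion $F(\pi(1))=p^*$ in Theorem~\ref{prop:simplex_descent}. The paper's single-segment path only certifies a strict decrease.
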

\begin{proof}
Write $V_l:=g(w^{(l)})$ as before and choose $l_1\in\arg\max_lV_l$, so
$V_{l_1}\ge\sum_l\tfrac1LV_l=F(w,\tfrac1L\mathbf1)$, exactly as in
Theorem~\ref{prop:simplex_descent}'s Case~2. Move $w^{(l_1)}$ along the
straight line $(1-t)w^{(l_1)}+t\,e(\xbar^*)$ to the global minimizer's one-hot
weight, holding $\beta$ (hence $\beta_{l_1}=1/L$) and every other level
fixed throughout; there is no Phase~A here, since redistributing $\beta$
is unavailable. Only the term $\tfrac1Lg(w^{(l_1)})$ varies, so for every $t$
\[
  F(\pi(t)) - F(w,\tfrac1L\mathbf1)
    = \tfrac1L\bigl[g(w^{(l_1)}(t)) - V_{l_1}\bigr]
    \;\le\; \tfrac1LC ,
\]
using $g(w^{(l_1)}(t))\le p^*+C$ and $V_{l_1}\ge p^*$
(Eq.~\eqref{eq:oscillation_bound}), so the running maximum over the whole
path is at most $F(w,\tfrac1L\mathbf1)+C/L$. At $t=1$,
$F(\pi(1))-F(w,\tfrac1L\mathbf1)=\tfrac1L(p^*-V_{l_1})<0$, a strict decrease
since $V_{l_1}\ge F(w,\tfrac1L\mathbf1)>p^*$. Hence $\pi$ is admissible in
the (restricted) Definition~\ref{def:escape_barrier}, giving
$B_{\mathrm{fix}}(w)\le C/L$.
\end{proof}
\noindent As $L\to\infty$ this recovers Theorem
\ref{prop:simplex_descent}'s $B=0$: it is exactly the freedom to shrink
$\beta_{l_1}\to0$ in Cases~1-2 that pinning $\beta$ at $1/L$
forecloses, leaving $\delta'=1/L$ (not driveable $\to0$ at fixed $L$) as
the best available discount on the oscillation bound. Algorithm
\ref{alg:fmgps_maxcut} is therefore guaranteed only the weaker $C/L$ barrier
of Proposition~\ref{prop:fixed_beta_bound}, not the zero barrier of Theorem
\ref{prop:simplex_descent}; realizing the stronger guarantee would require
implementing a solver that actually moves $\beta$, which this paper does not
do.

\section{Extended background, related work, pipeline mechanics, further results and contributions}
\label{app:extended}

This appendix records additional material compressed in the main text:
related approaches, the need for structured initialization, the roles of the
three pipeline stages, the binary-constraint derivation, and the mechanics of
coarsening and refinement. It is included for completeness and does not change
any claims in the main text.

\subsection{Existing approaches}
\label{app:existing_approaches}

MAX-CUT algorithms fall into several broad categories. Exact methods such as
branch-and-bound (B\&B) and dynamic programming provide certificates but scale
exponentially in the worst case. The Biq Mac library~\citep{wiegele2007biqmac}
uses exact methods to provide optima for the structured instances used as
ground truth here. The Lasserre hierarchy~\citep{lasserre2001} gives a
sequence of increasingly tight semidefinite relaxations, but its moment
matrices grow quickly with $n$. Goemans-Williamson~\citep{goemans1995} solves
an SDP relaxation and rounds by random hyperplanes, giving the classical
$0.878$ worst-case approximation ratio. Learned solvers form a more recent
category: OptGNN~\citep{yau2024optgnn} learns updates in a Burer-Monteiro
relaxation, Chen et al.~\citep{chen2026neuralsdp} learn SDP certificates for
branch-and-bound, and Dong et al.~\citep{dong2025streaming} study streaming
algorithms with an oracle-like learned labeling. Our method supplies the
structural prior through graph coarsening instead of learning.

\subsection{Why structured initialization is necessary}
\label{app:challenge}

The finite-particle PPR solver is still an iterative nonconvex method. The
exact moment formulation gives a useful theoretical target, but it does not
ensure that a finite number of particles and restarts will find the right
basin. The hard $n=15$ instance illustrates the issue: random initialization
misses the exact optimum in all $21$ restarts, while CMG finds it directly
(Table~\ref{tab:hard_instance}). This motivates treating initialization as a
central part of the solver rather than as an implementation detail.

\begin{table}[htbp!]
\small\centering
\caption{\small Hard instance $n=15$, seed $=44$, known optimum $38$. ``CMG'' means
  CMG followed by local-flip refinement.}
\label{tab:hard_instance}
\begin{tabular}{lrrc}\toprule
Strategy & Cut & Ratio & Time \\\midrule
Random-init PPR, best of 21 & 37 & 0.974 & $21\times\!\sim\!7$s \\
Random-init PPR, exact hits & 0/21 & -- & -- \\
CMG$+$flip & \textbf{38} & \textbf{1.000} & $30\,\mu$s \\
CMG$\to$PPR warm start & \textbf{38} & \textbf{1.000} & $\sim$12s \\
\bottomrule
\end{tabular}
\end{table}

\subsection{Three complementary tools}
\label{app:three_tools}

The pipeline combines three stages. Spielman-Srivastava sparsification
\citep{spielman2011} reduces dense graphs to sparse spectral approximations
when appropriate. CMG~\citep{langston2019cmg,koutis2011combinatorial} builds
a hierarchy from the graph's own edge weights and returns a binary warm start.
PPR then uses that warm start inside a continuous particle representation that
can make multi-node changes. The composition is
\[
  \underbrace{G\xrightarrow{\text{S-S}}G'}_{\text{sparsify}}\to
  \underbrace{G'\xrightarrow{\text{CMG}}x_{\mathrm{cmg}}}_{\text{warm start}}\to
  \underbrace{x_{\mathrm{cmg}}\xrightarrow{\text{PPR}}x'}_{\text{fine-level polish}}\to
  \underbrace{x'\xrightarrow{\text{2-opt}}x^*}_{\text{refine}},
\]
matching Figure~\ref{fig:pipeline_spine}. In our experiments, the strongest performance is obtained when the stages are composed in this order; removing individual stages generally reduces solution quality on the hardest instances; however, on graphs that are already sparse, the first stage is typically unnecessary.

\subsection{The pipeline end to end}
\label{app:pipeline_end}

Figure~\ref{fig:pipeline_spine} draws the composed architecture: S-S
sparsification (a no-op on already-sparse graphs) feeds CMG coarsening to a
directly-solvable coarsest level, with the SQP sweet-spot solve substituted
in at any level with $n_\ell\leq20$; the coarsest solution is projected back
up through local-flip refinement; PPR then runs once,
warm-started from that projection, on the finest graph; and multi-pass
restarts followed by a single 2-opt sweep close the gap to the exact
optimum. This is the pipeline used for every headline result in
Section~\ref{sec:results}, traced end to end on a real graph in
Appendix~\ref{app:walkthrough}.

\begin{figure}[htbp]
\centering
\includegraphics[width=0.75\textwidth]{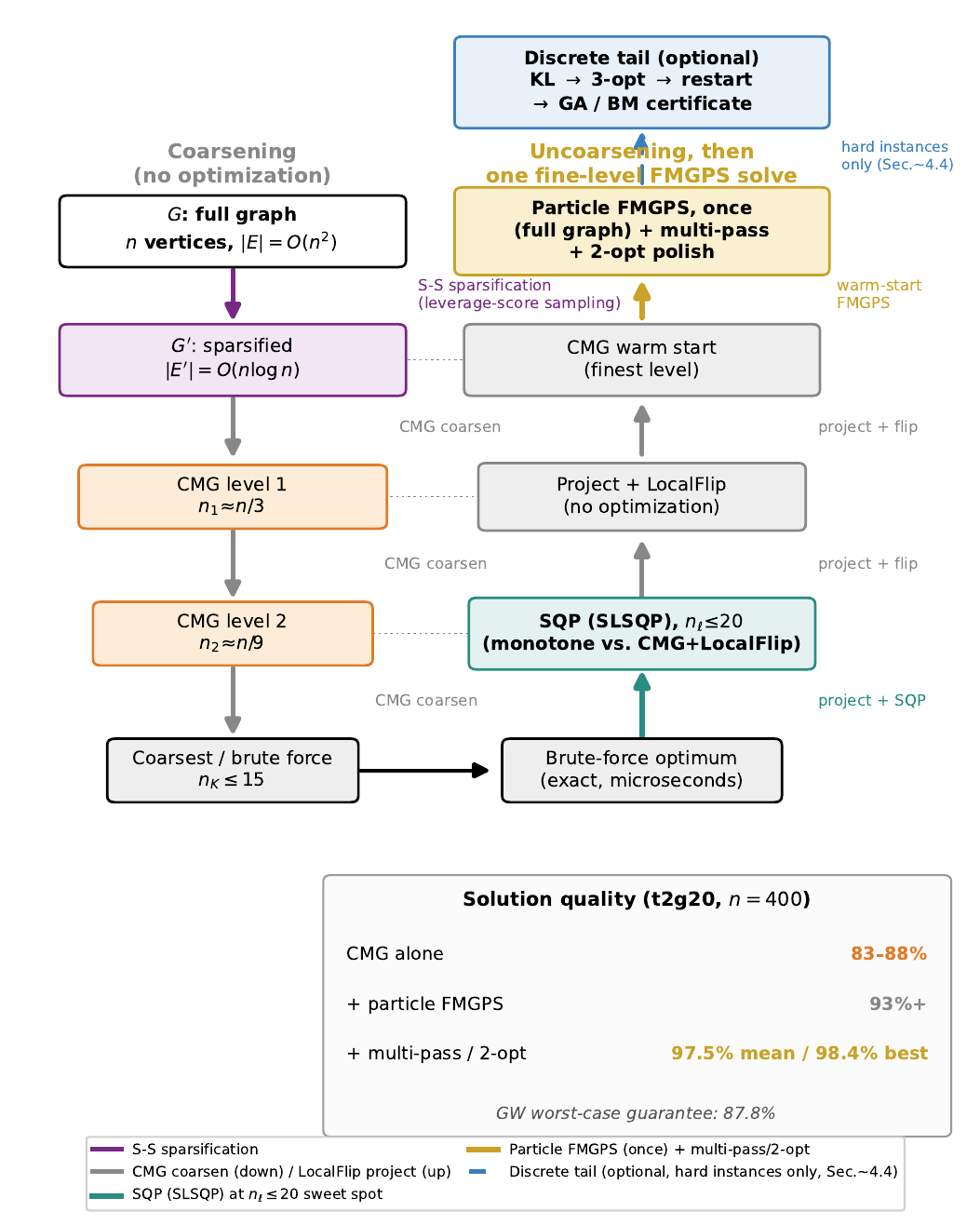}
\caption{\small The composed pipeline: S-S sparsification (no-op if already sparse)
  feeds CMG coarsening to a directly-solvable coarsest level; the coarsest
  solution is projected back up with local-flip refinement; PPR
  then runs, warm-started from that projection, on the finest graph,
  followed by multi-pass restarts and a single closing 2-opt sweep.}
\label{fig:pipeline_spine}
\end{figure}

\subsection{Further Contributions and Additional Results}
\label{app:contributions}

The main contributions are: (1) applying the particle moment framework to the
MAX-CUT binary constraint; (2) showing that CMG and PPR play complementary
roles in the compute-constrained regime, with the heavy-tail ablation
(Table~\ref{tab:tail_ablation}) showing this advantage narrows, and a
random start can match or exceed it on about half the instances tested, once
a large discrete-tail budget is affordable;
(3) demonstrating that PPR can improve beyond CMG's one-flip local optimum on
structured instances; (4) reporting the \texttt{t2g20} result of $0.975$ mean and
$0.984$ best cut/OPT; (5) identifying the dense homogeneous regime where the
method adds little; (6) evaluating the effect of a stronger discrete tail; and
(7) scaling the sparse CSR implementation to $n=20{,}000$ with no observed decay
in the gain over CMG across the tested size range.  We follow below with more details on the problem formulation, pipeline details, and additional results to strengthen and highlight our findings.

\subsubsection{The binary-constraint derivation}
\label{app:full_derivation}

With $x_i\in\{-1,+1\}$, MAX-CUT is
\[
  \max_x \frac14\sum_{(i,j)\in E}w_{ij}(1-x_ix_j),
\]
or equivalently the minimization of its negative over the algebraic variety
$\mathcal V=\{x\in[-1,1]^n:x_i^2=1\ \forall i\}$.




\subsubsection{Problem dimensions and scaling}
\label{app:scaling}

With degree-SOS $=2$, $L$ levels, and $R$ particles per factor, the particle
solver uses $O(LnR)$ parameters. For fixed $L$ and $R$, this is linear in
$n$, whereas the GW SDP uses $O(n^2)$ variables and Lasserre relaxations grow
much faster.

\begin{table}[htbp!]
\small\centering
\caption{\small Variable-count scaling for PPR-MAX-CUT and standard relaxations.}
\label{tab:scaling}
\begin{tabular}{lll}\toprule
Component & Count & Notes \\\midrule
Particle locations and weights & $O(LnR)$ & linear in $n$ for fixed $L,R$ \\\midrule
GW SDP variables & $O(n^2)$ & quadratic in $n$ \\
Lasserre level $k$ SDP & $O\!\left(\binom{n+2k}{n}^{2}\right)$ & grows rapidly with $n$ \\
\bottomrule
\end{tabular}
\end{table}

\subsubsection{Sparsification and leverage scores}
\label{app:leverage}

Effective-resistance sparsification samples edges according to leverage scores
$\ell_e=w_eR_e$, where
$R_e=L^\dagger_{ii}+L^\dagger_{jj}-2L^\dagger_{ij}$ is the effective
resistance of edge $e=(i,j)$. High-leverage edges are structurally important,
while low-leverage edges can often be removed while approximately preserving
cuts~\citep{spielman2011}. Dense uniform graphs have nearly flat leverage
scores, so sparsification gives little benefit. Weighted structured graphs can
have highly skewed leverage distributions, allowing a much smaller graph to
preserve the relevant cut structure.

\begin{figure}[htbp!]
\centering
\includegraphics[width=0.9\textwidth]{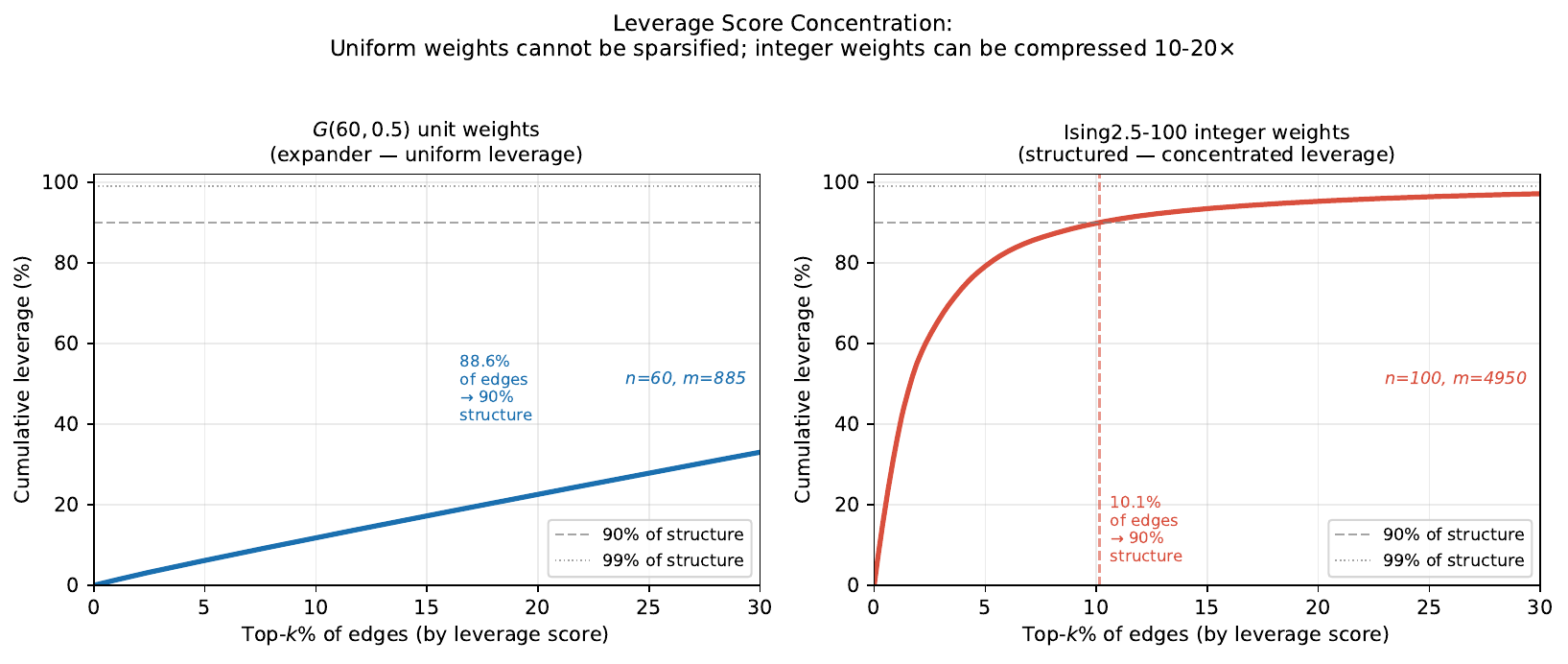}
\caption{\small Leverage-score distributions. Uniform-weight graphs have nearly flat
  leverage scores, while weighted structured graphs can have skewed scores
  that make sparsification useful.}
\label{fig:leverage}
\end{figure}

\subsubsection{CMG coarsening}
\label{app:cmg_steps}

One CMG coarsening level consists of profiling the graph, splitting long or
poorly connected forest paths, isolating weakly attached vertices, and
contracting the resulting components. Algorithm~\ref{alg:cmg_coarsen} gives
one level of this procedure.

\begin{algorithm2e}[htbp!]
\small
\SetAlgoLined
\DontPrintSemicolon
\caption{\small CMG graph coarsening, one level}
\label{alg:cmg_coarsen}
\KwIn{$G=(V,E,w)$}
\KwOut{coarsened $G'$ and cluster map $B$}
\textbf{GraphProfile:} $B[i]\gets\arg\max_{j\sim i}w_{ij}$, $M[i]\gets\max_{j\sim i}w_{ij}$, $D[i]\gets\sum_{j\sim i}w_{ij}$\;
\textbf{SplitForest:} cut edges in $B$ where path length is too large or conductance is low\;
\textbf{UpdateGroups:} if $M[i]/D[i]<1/8$, set $B[i]\gets i$\;
\textbf{ForestComponents:} label components of $B$ as clusters $\{C_1,\ldots,C_{n_c}\}$\;
\textbf{Coarsen:} $w'_{c_1c_2}\gets\sum_{(i,j)\in E,\,i\in C_{c_1},\,j\in C_{c_2}}w_{ij}$\;
\Return{coarsened $G'$ and cluster map $B$}\;
\end{algorithm2e}

Solving the coarsest graph directly and projecting the solution back down
with local-flip refinement gives the warm start $x_{\mathrm{cmg}}$. Unlike
balanced multilevel partitioners such as METIS~\citep{karypis1998} and
Graclus~\citep{dhillon2007}, CMG does not force balanced partitions. This is
useful for weighted Ising graphs whose cut structure is not balanced.

\begin{figure}[htbp!]
\centering
\includegraphics[width=0.55\textwidth]{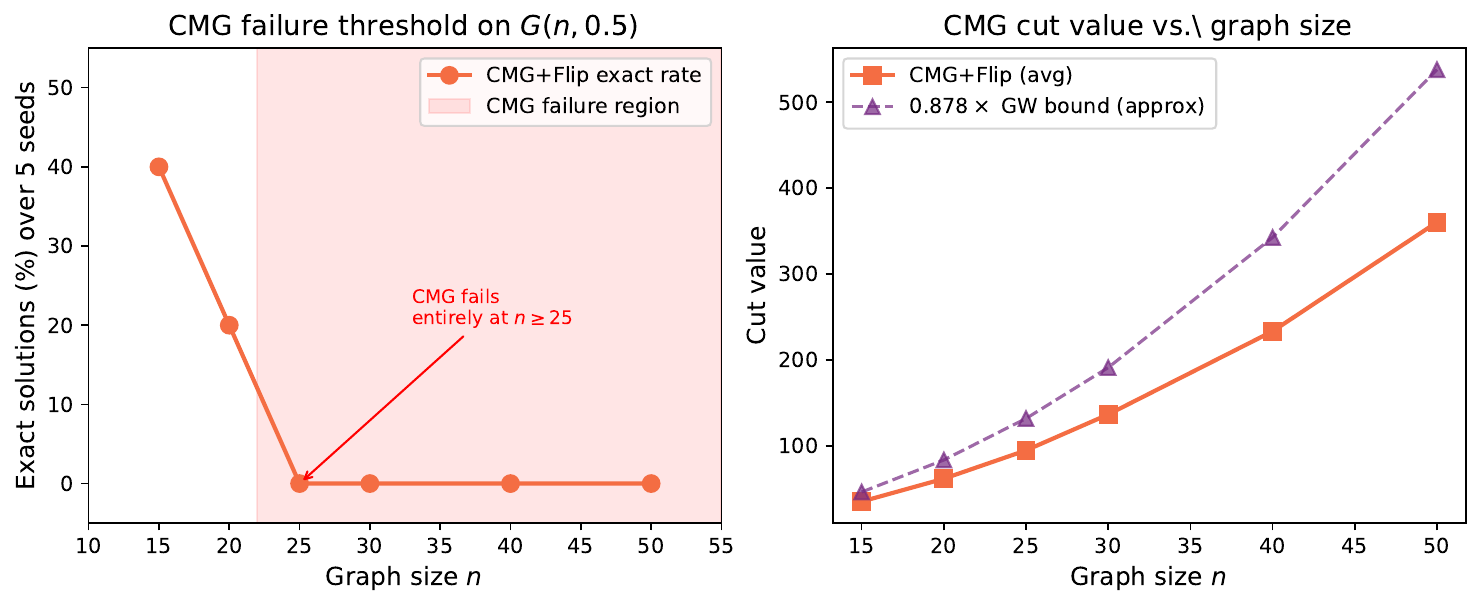}
\caption{\small CMG coarsening effectiveness versus graph density. Sparse structured
  graphs contract aggressively, while dense uniform graphs contract weakly.}
\label{fig:cmg_failure}
\end{figure}

\subsubsection{Refinement branches and multi-pass polishing}
\label{app:branches}

At very small coarsened levels ($n_\ell\leq20$), we can solve a rank-2,
degree-2-SOS objective with SLSQP \citep{kraft1988sqp} and then apply local-flip refinement. This
branch is useful only because it runs on a small coarse graph. The headline
Biq Mac and G-set results use the other branch: PPR runs directly on the
finest graph, initialized from the CMG projection. Multiple PPR passes are
then chained by feeding the best binary solution from one pass into the next,
and the final result is polished by a single 2-opt sweep on the original
graph.

\begin{table}[htbp]
\small\centering
\caption{\small Coarse sweet-spot ($n_\ell\leq20$) solver comparison, three seeds per
  size. ``Exact'' marks a cut matching the known or brute-force optimum.}
\label{tab:sqp_sweetspot}
\setlength{\tabcolsep}{4pt}
\begin{tabular}{llrrcrcrc}\toprule
Graph & Seed & OPT & CMG+Flip & Exact & ADMM & Exact & SQP & Exact\\\midrule
random-15 & 42 & 39 & 37 & N & 36 & N & \textbf{39} & Y\\
random-15 & 44 & 38 & 38 & Y & 37 & N & \textbf{38} & Y\\
random-20 & 43 & 62 & 62 & Y & 61 & N & \textbf{62} & Y\\
random-20 & 44 & 70 & 65 & N & 65 & N & \textbf{69} & N\\
planted-30-200 & 42-44 & (exact) & exact & Y & exact & Y & exact & Y\\\midrule
\multicolumn{2}{l}{Typical wall time} & & $<\!1$~ms & & 5-39~s & & 2-19~s & \\
\bottomrule
\end{tabular}
\end{table}

\subsubsection{Additional refinement and scaling results}
\label{app:additional_results}

Additional experimental results complement
the summary statistics reported in Section~\ref{sec:results}. Table~\ref{tab:t2g20} and Figures~\ref{fig:t2g20_progression}-~\ref{fig:sparse_scaling} illustrate how solution quality evolves across refinement
stages on \texttt{t2g20}, how quickly the multi-pass schedule
saturates, and how the sparse implementation scales on large G-set
instances.

\begin{table}[htbp]
\small\centering
\caption{\small \texttt{t2g20} ($n=400$) configuration ladder, mean cut/OPT over
  three seeds. The first, second, and fourth rows are the cumulative ladder
  plotted in Figure~\ref{fig:t2g20_progression}; the third is the competing
  \emph{2-opt after every pass} configuration, which branches off pass~1
  rather than extending the ladder and finishes $1.8$ points behind.}
\label{tab:t2g20}
\begin{tabular}{lcc}\toprule
Configuration & Mean & Best seed \\\midrule
CMG baseline & 0.832 & -- \\
5-pass PPR & 0.952 & 0.968 \\
2-opt after every pass & 0.957 & 0.970 \\
\textbf{5-pass $+$ single closing 2-opt} & \textbf{0.975} & \textbf{0.984} \\\midrule
GW guarantee & 0.878 & -- \\
\bottomrule
\end{tabular}
\end{table}

\begin{figure}[htbp]
\centering
\includegraphics{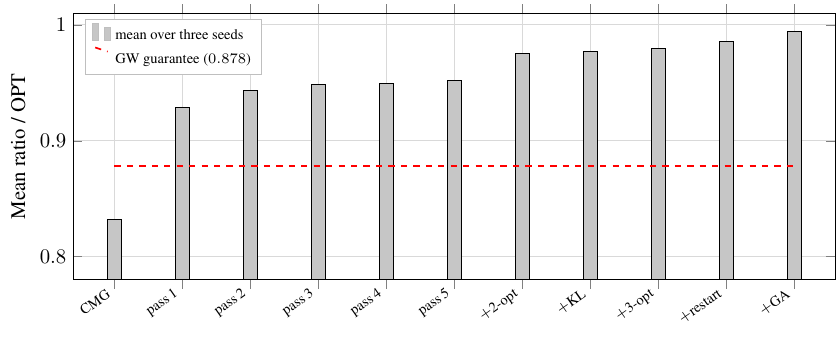}
\caption{
\small Progressive improvement on \texttt{t2g20} ($n=400$) across the
\emph{cumulative} stage ladder, mean over three seeds. The CMG warm
start opens at $0.832$; five PPR passes carry it to $0.952$; a single
closing 2-opt sweep gives the $0.975$ headline of
Table~\ref{tab:t2g20}; and the discrete tail (KL, 3-opt, $200$-restart,
memetic GA) closes to $0.994$. Every stage shown here is applied on top
of the previous one. The competing \emph{2-opt after every pass}
configuration is deliberately absent: it branches off pass~1 rather
than extending the ladder, and is reported in Table~\ref{tab:t2g20} above
and per-instance in Table~\ref{tab:2opt_order}.
}
\label{fig:t2g20_progression}
\end{figure}

\begin{figure}[htbp]
\centering
\includegraphics{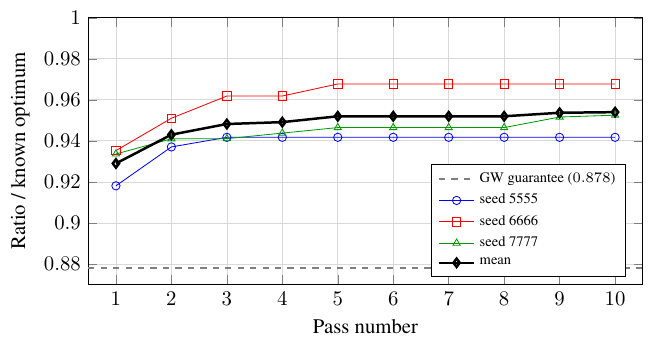}
\caption{
\small Ratio to known optimum versus pass number on \texttt{t2g20} ($n=400$),
$L{=}128$, $R{=}32$, $k{=}40$. Pass~2 yields the largest single gain
($+0.014$); the curve is essentially flat from pass~5 onward. Read
against Table~\ref{tab:phase8}, where \emph{quadrupling} the restart
count buys only $+0.004$ in total: it is the multi-pass schedule, not
raw restart breadth, that escapes the warm-start basin.
}
\label{fig:pass_conv}
\end{figure}

\begin{figure}[htbp]
\centering

\subfigure[Dense/CSR memory ratio; grows linearly in $n$.]{%
\begin{tikzpicture}
\begin{axis}[
  width=0.44\textwidth, height=0.34\textwidth,
  xlabel={$n$}, ylabel={Dense / CSR memory},
  xmin=3000, xmax=22000, ymin=0, ymax=3600,
  scaled x ticks=false, xtick={5000,10000,15000,20000},
  xticklabel style={/pgf/number format/fixed,
                    /pgf/number format/1000 sep={\,}, font=\scriptsize},
  yticklabel style={font=\scriptsize},
  grid=major, grid style={line width=0.3pt, draw=gray!30},
]
\addplot[solid, blue, mark=*, thick] coordinates {
  (5000,833.3) (8000,1333.3) (9000,1500.0)
  (10000,1666.7) (14000,2333.3) (20000,3333.3)};
\end{axis}
\end{tikzpicture}
\label{fig:sparse_memory}
}
\hfill
\subfigure[Final/CMG and 2-opt-only/CMG; both flat in $n$.]{%
\begin{tikzpicture}
\begin{axis}[
  width=0.44\textwidth, height=0.34\textwidth,
  xlabel={$n$}, ylabel={cut / CMG cut},
  xmin=3000, xmax=22000, ymin=0.97, ymax=1.4,
  scaled x ticks=false, xtick={5000,10000,15000,20000},
  xticklabel style={/pgf/number format/fixed,
                    /pgf/number format/1000 sep={\,}, font=\scriptsize},
  ytick={1.0,1.1,1.2,1.3,1.4}, yticklabel style={font=\scriptsize},
  grid=major, grid style={line width=0.3pt, draw=gray!30},
  legend style={font=\scriptsize, at={(0.97,0.08)}, anchor=south east,
                draw=none, fill=none},
]
\addplot[solid, red, mark=*, thick] coordinates {
  (5000,1.331) (8000,1.330) (9000,1.319)
  (10000,1.322) (14000,1.316) (20000,1.317)};
\addlegendentry{final (incl.\ discrete tail)}
\addplot[solid, blue, mark=triangle*, thick] coordinates {
  (5000,1.208) (8000,1.207) (9000,1.212)
  (10000,1.204) (14000,1.211) (20000,1.194)};
\addlegendentry{2-opt only}
\addplot[dashed, thick, gray] coordinates {(3000,1.0)(22000,1.0)};
\node[anchor=south west, font=\scriptsize, inner sep=1pt]
  at (axis cs:5200,1.005) {no gain over CMG};
\end{axis}
\end{tikzpicture}
\label{fig:sparse_gain}
}

\caption{
\small Sparse scaling on six G-set instances, $n=5000$-$20{,}000$ (a superset
of the four shown in Table~\ref{tab:sparse}, which omits \texttt{G65}
and \texttt{G66}). Memory savings grow linearly with $n$, exceeding
$3300\times$ at $n=20{,}000$. The right-hand panel shows the gain over the
CMG warm start at both pipeline stages: the continuous-relaxation path alone
(CMG$\to$PPR$\to$2-opt, blue) holds a tight $1.194$-$1.212\times$ band, and
appending the discrete tail (KL/3-opt/GA/ILS, red) carries this to
$1.316$-$1.331\times$; neither band decays at the largest instance. Both
panels are plotted against a common $n$ axis; the
right-hand panel's vertical axis is anchored at $1.0$, the no-gain
line, so that the flatness of the bands is not exaggerated by a
magnified scale.
}
\label{fig:sparse_scaling}
\end{figure}

\clearpage
\subsubsection{Pass and restart saturation}
\label{app:pass_saturation}

Tables~\ref{tab:pass_sweep} and~\ref{tab:phase8}
below provide the evidence that the quality of the warm-start basin, not the breadth of
random restarts, is what limits the pipeline on \texttt{t2g20}. Table~\ref{tab:pass_sweep} sweeps the pass count at
fixed restart budget: pass~2 delivers the largest single gain ($+0.014$), the
curve then flattens, and five passes capture essentially all achievable gain
($0.952$), with only a small late bump at pass~9. Table~\ref{tab:phase8} holds
the pass count at one and instead \emph{quadruples} the restart count: going
$k{=}40\to80\to160$ buys $+0.003$ then $+0.001$, so a single additional pass is
worth roughly five times a doubling of $k$.

\begin{table}[htbp]
\small\centering
\caption{\small Pass-count saturation on \texttt{t2g20} ($n=400$).
  $L{=}128$, $R{=}32$, $k{=}40$, EXTENDED penalty, $s{=}0.5$.
  $\Delta$ is the incremental mean gain over the previous pass.}
\label{tab:pass_sweep}
\begin{tabular}{rccccr}\toprule
Pass & seed 5555 & seed 6666 & seed 7777 & Mean & $\Delta$ \\\midrule
1  & 0.9180 & 0.9352 & 0.9336 & 0.9289 & --     \\
2  & 0.9370 & 0.9510 & 0.9410 & 0.9430 & +0.0140 \\
3  & 0.9417 & 0.9618 & 0.9410 & 0.9482 & +0.0052 \\
4  & 0.9417 & 0.9618 & 0.9438 & 0.9491 & +0.0009 \\
5  & 0.9417 & 0.9677 & 0.9464 & 0.9520 & +0.0029 \\
6  & 0.9417 & 0.9677 & 0.9464 & 0.9520 & +0.0000 \\
7  & 0.9417 & 0.9677 & 0.9464 & 0.9520 & +0.0000 \\
8  & 0.9417 & 0.9677 & 0.9464 & 0.9520 & +0.0000 \\
9  & 0.9417 & 0.9677 & 0.9516 & 0.9537 & +0.0017 \\
10 & 0.9417 & 0.9677 & 0.9525 & 0.9540 & +0.0003 \\\bottomrule
\end{tabular}
\end{table}

\begin{table}[htbp]
\small\centering
\caption{\small Single-pass high-$k$ restart sweep on \texttt{t2g20}.
  $L{=}128$, $R{=}32$, EXTENDED penalty, $s{=}0.5$. Quadrupling $k$ gains
  less than one extra pass does in Table~\ref{tab:pass_sweep}.}
\label{tab:phase8}
\begin{tabular}{rccccr}\toprule
$k$ & seed 5555 & seed 6666 & seed 7777 & Mean & $\Delta$ \\\midrule
 40 & 0.9180 & 0.9352 & 0.9336 & 0.9289 & --    \\
 80 & 0.9175 & 0.9475 & 0.9309 & 0.9320 & +0.003 \\
160 & 0.9249 & 0.9424 & 0.9323 & 0.9332 & +0.001 \\\bottomrule
\end{tabular}
\end{table}

\subsubsection{The full \texttt{t2g20} stage ladder and 2-opt ordering}
\label{app:t2g20_ladder}

Table~\ref{tab:2opt_order} isolates the 2-opt sweep itself. Applied once after
five passes (variant~A) it raises the mean from $0.952$ to $\mathbf{0.975}$ on
only $12$-$19$ accepted flips, and the best single seed reaches
$\mathbf{0.984}$ of the exact optimum. Interleaving it after every pass
(variant~B) instead finds \emph{zero} improving flips: the solution is already
2-opt-optimal from pass~1 onward, locked into a basin later passes cannot
escape, and it finishes at $0.957$. Variant~C shows how much a single 2-opt
sweep is worth: $1$-pass $+$ 2-opt ($0.962$) outperforms $5$-pass alone
($0.952$), so one 2-bit sweep recovers more than four extra particle passes.
Table~\ref{tab:configs} orders every configuration by mean ratio, and
Table~\ref{tab:t2g20_tail} extends the winning one with the discrete tail.

\begin{table}[htbp]
\small\centering
\caption{\small 2-opt polish on \texttt{t2g20} ($n=400$). ``Before''/``After'' are
  ratio/OPT around the 2-opt sweep; ``Flips'' counts accepted 2-opt moves.}
\label{tab:2opt_order}
\begin{tabular}{llccrr}\toprule
Experiment & Seed & Before & After & $\Delta$ & Flips \\\midrule
\multirow{4}{*}{A: 5-pass $+$ 2-opt}
  & 5555 & 0.9417 & 0.9664 & +0.025 & 12 \\
  & 6666 & 0.9677 & \textbf{0.9843} & +0.017 & 19 \\
  & 7777 & 0.9464 & 0.9746 & +0.028 & 18 \\
  & mean & 0.9520 & \textbf{0.9751} & +0.023 & 16 \\\midrule
\multirow{4}{*}{B: 2-opt each pass}
  & 5555 & 0.9548 & 0.9548 & 0.000 & 0 \\
  & 6666 & 0.9700 & 0.9700 & 0.000 & 0 \\
  & 7777 & 0.9461 & 0.9461 & 0.000 & 0 \\
  & mean & 0.9570 & 0.9570 & 0.000 & 0 \\\midrule
\multirow{4}{*}{C: 1-pass $+$ 2-opt}
  & 5555 & 0.9217 & 0.9637 & +0.042 & 23 \\
  & 6666 & 0.9498 & 0.9653 & +0.015 & 13 \\
  & 7777 & 0.9276 & 0.9574 & +0.030 & 24 \\
  & mean & 0.9330 & \textbf{0.9621} & +0.029 & 20 \\\midrule
\multicolumn{2}{l}{GW guarantee} & \multicolumn{4}{c}{0.878} \\
\bottomrule
\end{tabular}
\end{table}

\begin{table}[htbp]
\small\centering
\caption{\small All \texttt{t2g20} ($n=400$) configurations, ordered by mean ratio/OPT.
  ``1-pass only'' is the single-pass configuration with 2-opt disabled;
  Table~\ref{tab:pass_sweep}'s pass-1 row ($0.9289$) is the same nominal
  configuration measured in a separate run and differs by $0.0041$; this is a run-to-run variation, not a discrepancy in the underlying method.}
\label{tab:configs}
\begin{tabular}{lcc}\toprule
Configuration & Mean & Best seed \\\midrule
CMG baseline               & 0.832 & -- \\
1-pass only                & 0.933 & 0.950 \\
5-pass only                & 0.952 & 0.968 \\
B: 2-opt after each pass   & 0.957 & 0.970 \\
\textbf{C: 1-pass $+$ 2-opt} & \textbf{0.962} & 0.965 \\
\textbf{A: 5-pass $+$ 2-opt} & \textbf{0.975} & \textbf{0.984} \\\midrule
GW guarantee               & 0.878 & -- \\
\bottomrule
\end{tabular}
\end{table}

\begin{table}[htbp]
\small\centering
\caption{\small Discrete tail appended to the Experiment~A (5-pass PPR $+$ 2-opt)
  partition on \texttt{t2g20}, all three seeds, FULL config
  ($L{=}128,R{=}32,k{=}40$, RHO\_EXTENDED, $s{=}0.5$). The $+$2-opt row
  reproduces Table~\ref{tab:2opt_order}'s Experiment~A exactly (bit-identical
  RNG stream and draw order), confirming the tail is layered on the genuine
  headline partition, not an approximation of it. \texttt{ga\_only} is the
  uncontaminated baseline (independent RNG, no CMG or PPR) at the same GA
  budget; note that it \emph{exceeds} the full pipeline here, the same
  finding the heavy-set ablation of Table~\ref{tab:tail_ablation} reports.}
\label{tab:t2g20_tail}
\begin{tabular}{lccccc}\toprule
Stage & 5555 & 6666 & 7777 & Mean & vs.\ headline \\\midrule
CMG              & 0.7915 & 0.8630 & 0.8406 & 0.8317 & -- \\
5-pass PPR       & 0.9417 & 0.9677 & 0.9464 & 0.9520 & $=$Table~\ref{tab:configs} \\
$+$2-opt (headline) & 0.9664 & 0.9843 & 0.9746 & 0.9751 & $=$Table~\ref{tab:2opt_order} \\
$+$KL            & 0.9710 & 0.9843 & 0.9746 & 0.9766 & +0.0015 \\
$+$3-opt         & 0.9723 & 0.9888 & 0.9778 & 0.9796 & +0.0046 \\
$+$restart (200) & 0.9846 & 0.9888 & 0.9836 & 0.9857 & +0.0106 \\
\textbf{$+$GA}   & \textbf{0.9876} & \textbf{0.9959} & \textbf{0.9988} & \textbf{0.9941} & \textbf{+0.0190} \\\midrule
\texttt{ga\_only} (uncontaminated) & 0.9971 & 0.9973 & 0.9986 & 0.9977 & -- \\
\bottomrule
\end{tabular}
\end{table}

\subsubsection{Tail ablation on the heavy set, and where the \texttt{G81} budget goes}
\label{app:tail_anatomy}

This block supports the open question: ``Once a discrete tail is
budgeted, does the warm start still decide the answer?''
Table~\ref{tab:tail_ablation} ablates the tail stage by stage across eight
instances, and Figure~\ref{fig:tail_progression} plots it against the
\texttt{ga\_only} control: a pure-random-seeded GA at the same budget, with
no CMG or PPR. On three of eight instances the control \emph{beats} the full
pipeline and on a fourth it ties, which is the isolation finding the body
cites. Table~\ref{tab:tail_ablation} reports absolute cuts, so nothing in it
depends on an external reference value. Figure~\ref{fig:tail_progression}
normalizes the same measurements by a best-known cut drawn from the
literature.  Table~\ref{tab:tail_bm} adds
an \emph{empirical} SDP-relaxation estimate (not a certified upper bound) for the same eight instances.


\begin{table}[htbp]
\small\centering
\caption{\small Discrete-tail ablation across the heavy set (single core, one shared
  seed so every row is directly comparable). The base of each row is the CMG
  warm start $\to$ light multi-pass PPR ($L{=}32,R{=}8,k{=}20$, five
  passes) $\to$ 2-opt; subsequent columns append KL, then $200$-restart local
  search, then the memetic GA, reporting the cut after each. 3-opt runs between
  KL and the restart stage but is omitted as a column: it finds \emph{zero}
  improving moves on seven of the eight instances, and $+2$ units on
  \texttt{G22}. \texttt{ga\_only} is the control: the same memetic GA at the
  same budget from a pure random seed, with no CMG and no PPR. All cuts are
  absolute, so the pipeline-versus-control comparison needs no external
  reference value; \textbf{bold} marks the larger of $+$GA and
  \texttt{ga\_only} on each row. $t_{\mathrm{F}}$ and $t_{\mathrm{tail}}$ are
  the PPR and total-tail wall-clock in seconds. The top block is the
  sparse-signed toroidal target regime; the bottom block is dense
  uniform-weight contrast.}
\label{tab:tail_ablation}
\setlength{\tabcolsep}{4.5pt}
{%
\begin{tabular}{lrr rrrrr r r}\toprule
Instance & $n$ & $|E|$ & CMG & PPR$+$2-opt & $+$KL & $+$rst & $+$GA & \texttt{ga\_only} & $t_{\mathrm{F}}/t_{\mathrm{tail}}$\\\midrule
\multicolumn{10}{l}{\emph{Sparse-signed toroidal (target regime)}}\\
\texttt{G11} & 800  & 1600  & 418  & 532  & 548  & 560  & 560            & \textbf{562}   & 133/25\\
\texttt{G12} & 800  & 1600  & 414  & 528  & 546  & 550  & 554            & \textbf{556}   & 163/25\\
\texttt{G13} & 800  & 1600  & 416  & 546  & 562  & 578  & \textbf{580}   & \textbf{580}   & 165/26\\
\texttt{G32} & 2000 & 4000  & 1070 & 1306 & 1356 & 1382 & \textbf{1402}  & 1396           & 550/128\\
\texttt{G33} & 2000 & 4000  & 1012 & 1298 & 1328 & 1358 & \textbf{1368}  & 1362           & 580/130\\
\texttt{G34} & 2000 & 4000  & 1016 & 1300 & 1344 & 1366 & 1372           & \textbf{1376}  & 631/132\\\midrule
\multicolumn{10}{l}{\emph{Dense uniform-weight (contrast)}}\\
\texttt{G1}  & 800  & 19176 & 11300 & 11541 & 11541 & 11571 & \textbf{11621} & 11617 & 235/513\\
\texttt{G22} & 2000 & 19990 & 12806 & 13113 & 13126 & 13228 & \textbf{13355} & 13332 & 1068/305\\
\bottomrule
\end{tabular}%
}
\end{table}

\begin{figure}[htbp]
\centering
\includegraphics{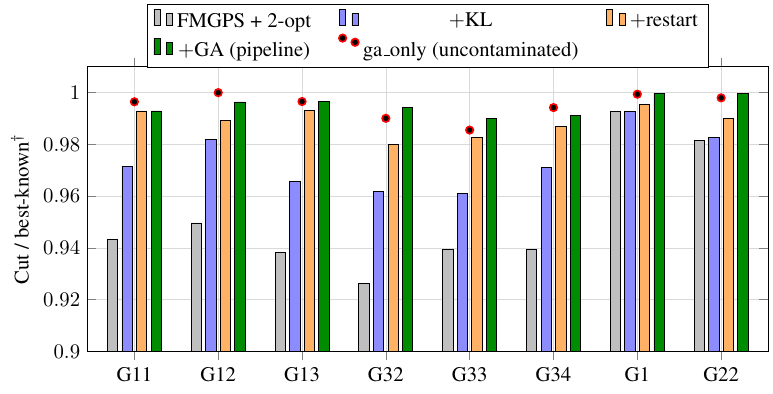}
\caption{\small Cut/best-known$^\dagger$ after each tail stage
  (Table~\ref{tab:tail_ablation}), with the uncontaminated \texttt{ga\_only}
  baseline overlaid as red points. Where the red point sits \emph{above} the
  green $+$GA bar (\texttt{G11}, \texttt{G12}, \texttt{G34}; equal on
  \texttt{G13}), a pure-random-seeded GA with no CMG or PPR warm start reaches a
  better final cut than the full pipeline; the visual form of the isolation
  finding: once the discrete tail is budgeted, the particle solver does not
  uniformly improve the reported answer. $^\dagger$\footnotesize{The denominator here is an
  external best-known reference, not a proven optimum; its provenance is
  flagged in the text preceding Table~\ref{tab:tail_ablation}.
  It cancels from the red-versus-green comparison this figure is drawn to make,
  since both are our own measurements on the same instance.}}
\label{fig:tail_progression}
\end{figure}

Table~\ref{tab:g81_stages} asks where the wall clock actually goes on the
largest instance. The marginal efficiency column spans \emph{five orders of
magnitude}: the first five stages consume $75.7$\,s, or $1.53\%$ of the
original budget, and deliver most of the total gain, while the $200$-restart
stage spends $2782$\,s to buy $30$ cut units: $0.011$ cut/s, the least
productive stage in the study by three orders of magnitude. Removing it and
retuning the remaining stages is what reduces the budget from $4951$\,s to
$261$\,s while \emph{improving} the cut.

\begin{table}[htbp]
\small\centering
\caption{\small Per-stage anatomy of \texttt{G81} ($n=20{,}000$, $|E|=40{,}000$,
  optimum $14060$~\citep{zick2026cosm}) before and after retuning.
  ``$\Delta$cut'' is the gain over the preceding stage; ``$t$'' is that
  stage's own wall time (obtained by differencing the harness's cumulative
  timestamps); ``cut/s'' is $\Delta\mathrm{cut}/t$, the stage's marginal
  efficiency. The original column is an earlier, unretuned configuration:
  dense all-pairs 2-opt, linear-scan
  Kernighan-Lin, $200$ random restarts, memetic GA at $150$
  generations. The retuned column, CSR-adjacent 2-opt, heap-based KL, no
  restart stage, GA at $800$ generations, and a perturbation-based iterated
  local search, is the configuration behind Table~\ref{tab:sparse};
  ``--'' marks a stage that configuration does not run, and the
  retuned 2-opt's efficiency is omitted because its wall time is at timer
  resolution.}
\label{tab:g81_stages}
\setlength{\tabcolsep}{4.5pt}
\begin{tabular}{lrrrrrrr}\toprule
& \multicolumn{3}{c}{Original ($4951$\,s)} & \multicolumn{3}{c}{Retuned ($261.2$\,s)} & \\\cmidrule(lr){2-4}\cmidrule(lr){5-7}
Stage & cut & $t$ (s) & cut/s & cut & $t$ (s) & cut/s & \% opt\\\midrule
CMG warm start          & 10574 & 0.004   & --            & 10574 & 0.007  & --   & 75.21\%\\
Light PPR (3 passes)    & 12466 & 64.37   & 29.4           & 12466 & 64.07  & 29.5  & 88.66\%\\
2-opt sweep             & 12628 & 4.47    & 36.3           & 12626 & 0.000  & --   & 89.80\%\\
Kernighan-Lin          & 13674 & 6.19    & \textbf{169.0} & 13674 & 0.08   & \textbf{13789.5} & 97.25\%\\
3-opt $+$ KL/3-opt loop & 13676 & 0.70    & 2.9            & 13676 & 0.04   & 48.8  & 97.27\%\\
Multi-restart ($200$)   & 13706 & 2781.86 & \textbf{0.011} & --   & --    & --   & --\\
Memetic GA              & 13876 & 1880.56 & 0.090          & 13912 & 57.50  & 4.10  & 98.95\%\\
Iterated local search   & --   & --     & --            & 13922 & 136.11 & 0.073 & \textbf{99.02\%}\\
Burer-Monteiro round   & 13876 & 213.10  & 0.000          & 13922 & 3.39   & 0.000 & 99.02\%\\
\bottomrule
\end{tabular}
\end{table}

Finally, every stage above chases a bigger \emph{cut}, and none of them says
how close that cut is to optimal. Table~\ref{tab:tail_bm} closes the ablation
with a stage that answers the other question: solve the rank-$p$ SDP relaxation
$\max\tfrac12\sum w_{ij}(1-\langle v_i,v_j\rangle)$, $\lVert v_i\rVert=1$, by
the Mixing Method~\citep{wang2017mixing} (coordinate update
$v_i\leftarrow-g_i/\lVert g_i\rVert$), round via $200$ GW random hyperplanes,
and locally optimize each rounding. At $p=\lceil\sqrt{2n}\rceil$
\citep{boumal2016burermonteiro} the rank-restricted relaxation is
\emph{generically} free of spurious local minima, but the Mixing Method's
convergence check does not itself verify that condition, so the converged
value is an \emph{empirical estimate}, not a certified upper bound on the
true MAX-CUT value (Table~\ref{tab:tail_bm} and
Appendix~\ref{app:walkthrough} say more). The resulting gaps, $10$-$12\%$ on the
sparse-signed instances and $4$-$6\%$ on the dense contrast pair, are
orthogonal to the cut-improvement stages: Burer-Monteiro never beats the GA on
cut value, and is not intended to.

\begin{table}[htbp]
\small\centering
\caption{\small Burer-Monteiro relax-and-reround estimate (rank
  $p=\lceil\sqrt{2n}\rceil$, $200$ GW hyperplanes, single core). ``bound'' is
  the converged value of the Mixing Method's coordinate updates on the
  (nonconvex, sphere-constrained) SDP relaxation; since the Mixing Method
  is not guaranteed to reach the relaxation's global optimum, this is an
  \emph{empirical estimate}, not a certified upper bound on the true
  MAX-CUT value (Appendix~\ref{app:walkthrough}'s Karate Club walkthrough
  reports a small negative gap, which a genuine upper bound cannot produce).
  gap$_{\mathrm{BM}}$ is the empirical gap between that estimate and
  Burer-Monteiro's own rounded cut; gap$_{\mathrm{GA}}$ is the empirical gap
  between the estimate and the pipeline's best reported cut (the $+$GA column
  of
  Table~\ref{tab:tail_ablation}), which is always at least as tight since the
  GA cut is $\geq$ BM's cut on every instance. Both gap columns are measured
  against an estimate this pipeline computes itself, so like
  Table~\ref{tab:tail_ablation} they involve no external reference value.}
\label{tab:tail_bm}
\begin{tabular}{lrrrrrr}\toprule
Instance & bound & BM cut & GA cut & gap$_{\mathrm{BM}}$ (\%) & gap$_{\mathrm{GA}}$ (\%) & $t_{\mathrm{BM}}$ (s)\\\midrule
\texttt{G11} & 627.7   & 558   & 560   & 11.10 & 10.78 & 1.9\\
\texttt{G12} & 622.3   & 554   & 554   & 10.97 & 10.97 & 1.9\\
\texttt{G13} & 645.1   & 578   & 580   & 10.40 & 10.09 & 1.9\\
\texttt{G32} & 1563.1  & 1386  & 1402  & 11.33 & 10.31 & 12.0\\
\texttt{G33} & 1539.5  & 1360  & 1368  & 11.66 & 11.14 & 12.1\\
\texttt{G34} & 1541.7  & 1366  & 1372  & 11.39 & 11.00 & 11.9\\
\texttt{G1}  & 12077.6 & 11591 & 11621 & 4.03  & 3.78  & 108.8\\
\texttt{G22} & 14125.8 & 13283 & 13355 & 5.97  & 5.46  & 83.1\\
\bottomrule
\end{tabular}
\end{table}

\subsubsection{Biq Mac families: cut quality against wall time}
\label{app:biqmac_fig}

Figure~\ref{fig:biqmac_neural} is the visual form of body
Table~\ref{tab:biqmac}.

\begin{figure}[htbp]
\centering
\includegraphics[width=\textwidth]{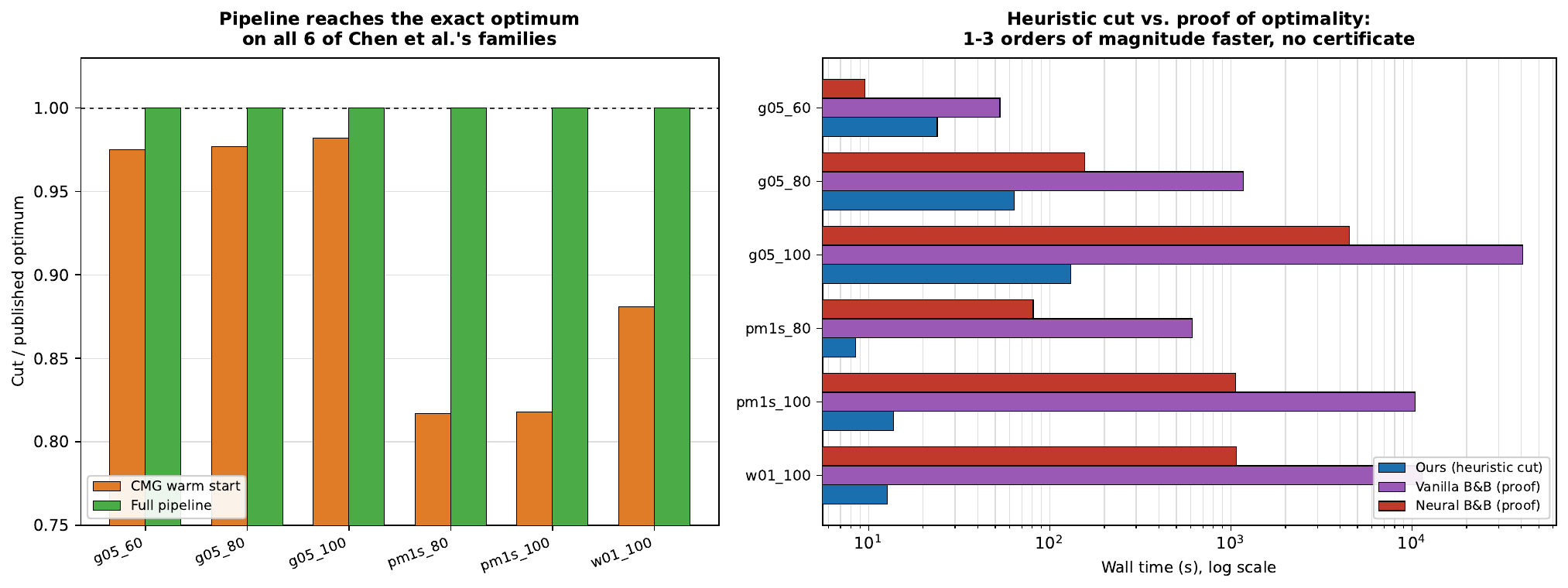}
\caption{\small The six Biq Mac families of Chen et al.~\citep{chen2026neuralsdp},
  ten instances each. \textbf{Left:} cut divided by published optimum, CMG
  warm start versus the full pipeline; every family reaches $1.000$.
  \textbf{Right:} wall time on a log axis, ours against their vanilla and
  neural branch-and-bound. The two axes measure different deliverables: our
  bar is a heuristic cut that matches a published optimum, theirs is a
  \emph{proof} of optimality. The time comparison is therefore not a
  like-for-like ranking of solvers, and is shown because it is the comparison
  the learned-solver literature sets up.}
\label{fig:biqmac_neural}
\end{figure}

\subsubsection{Pipeline Walkthrough Example}
\label{app:walkthrough}

Figure~\ref{fig:walkthrough} shows the pipeline on Zachary's Karate
Club~\citep{zachary1977}. The example is illustrative rather than a benchmark:
it uses degree-weighted edges to expose structure and does not report a ratio
to a published optimum. It shows the mechanical role of each stage: sparsify,
coarsen, project a warm start, and run PPR on the finest graph.

\begin{figure}[htbp]
\centering
\subfigure[Input $G$]{\includegraphics[width=0.46\textwidth]{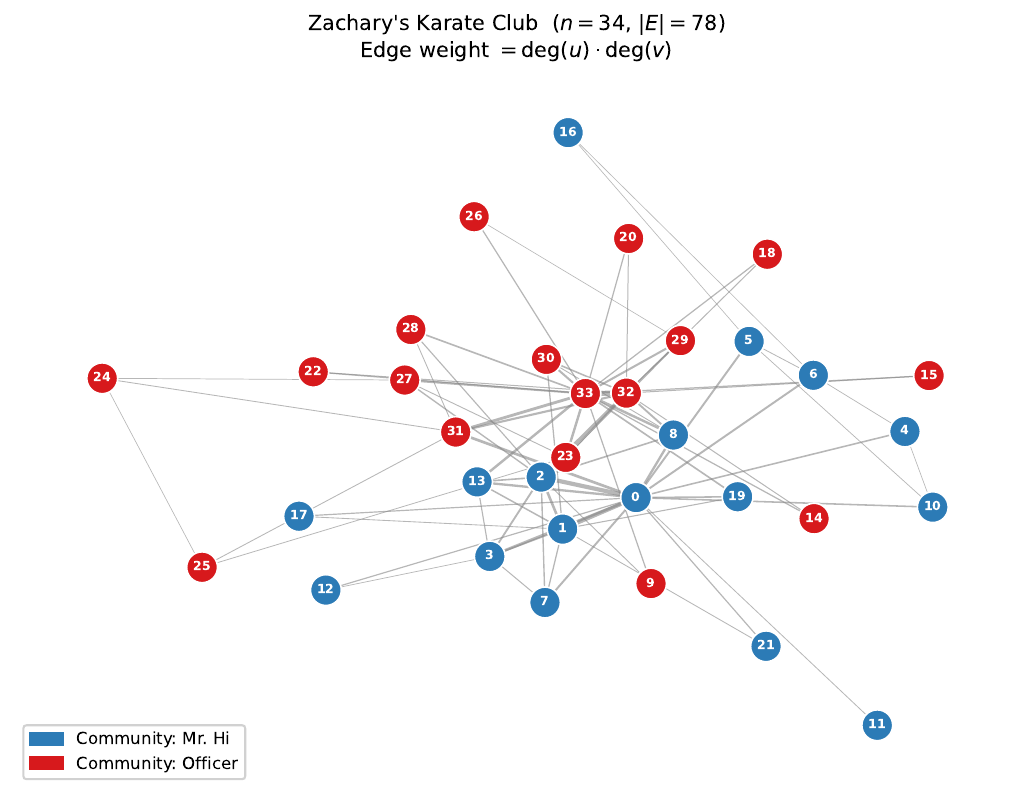}}\hfill
\subfigure[After S-S]{\includegraphics[width=0.46\textwidth]{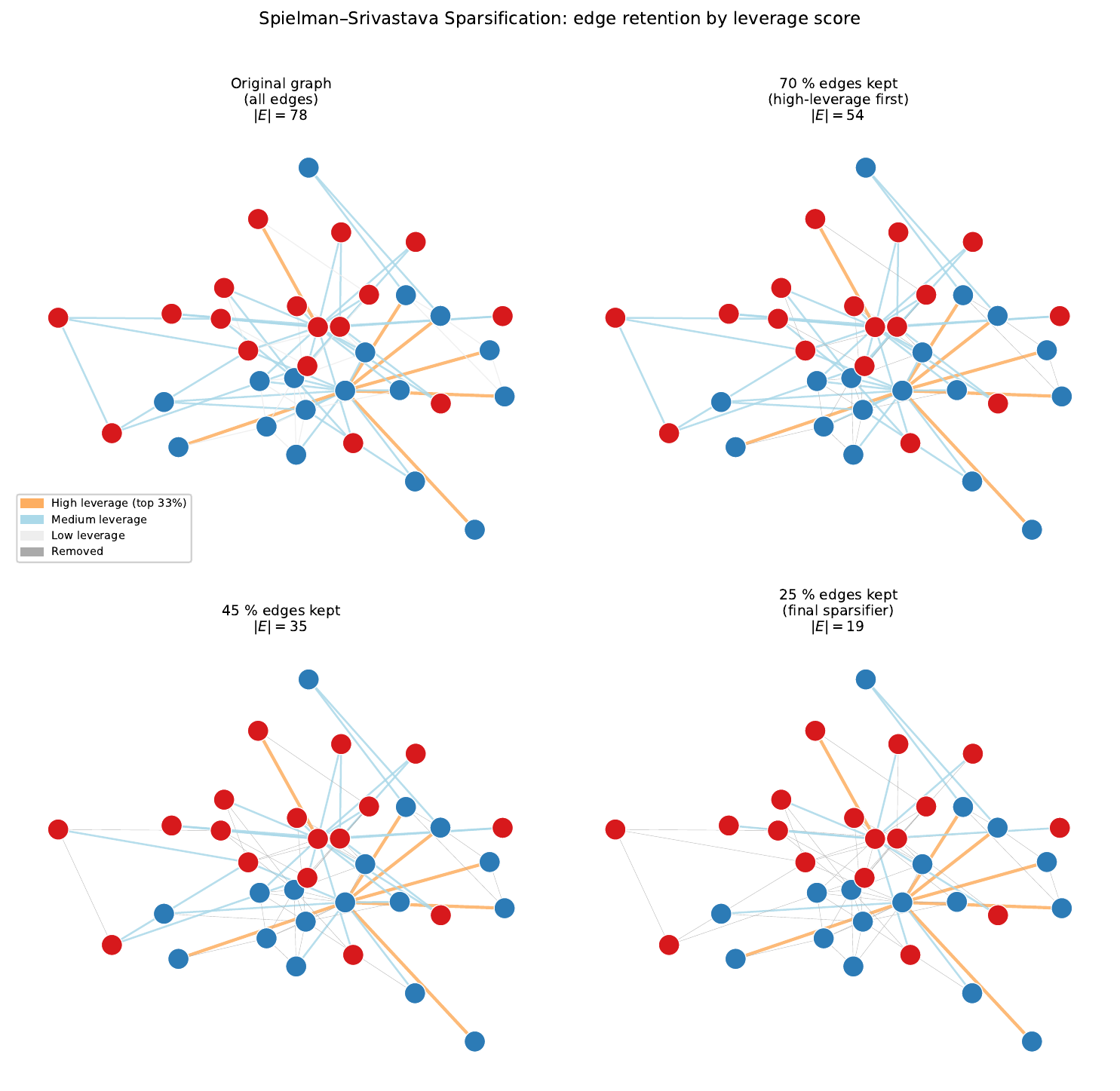}}

\subfigure[CMG groups]{\includegraphics[width=0.46\textwidth]{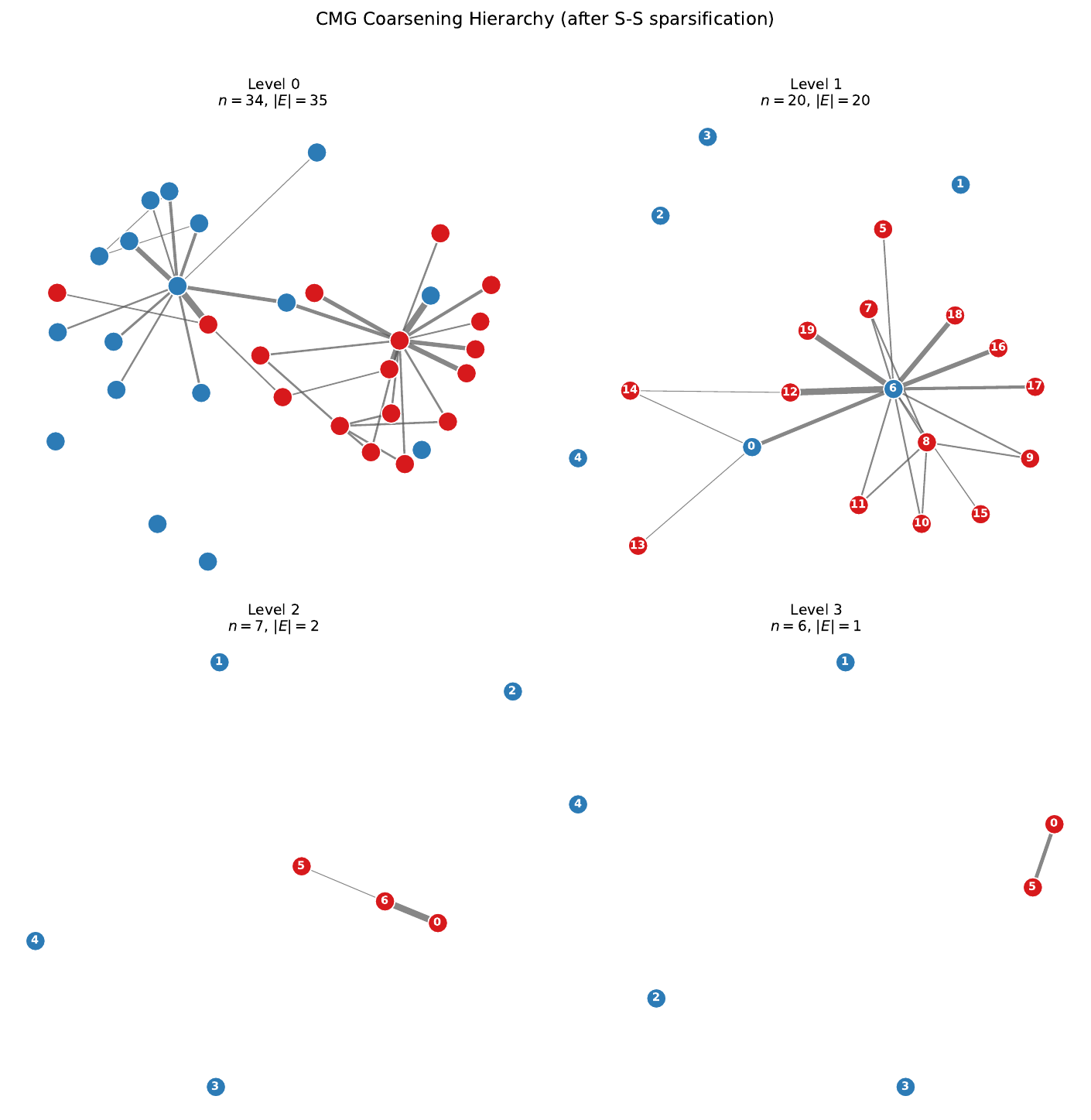}}\hfill
\subfigure[PPR, finest graph]{\includegraphics[width=0.46\textwidth]{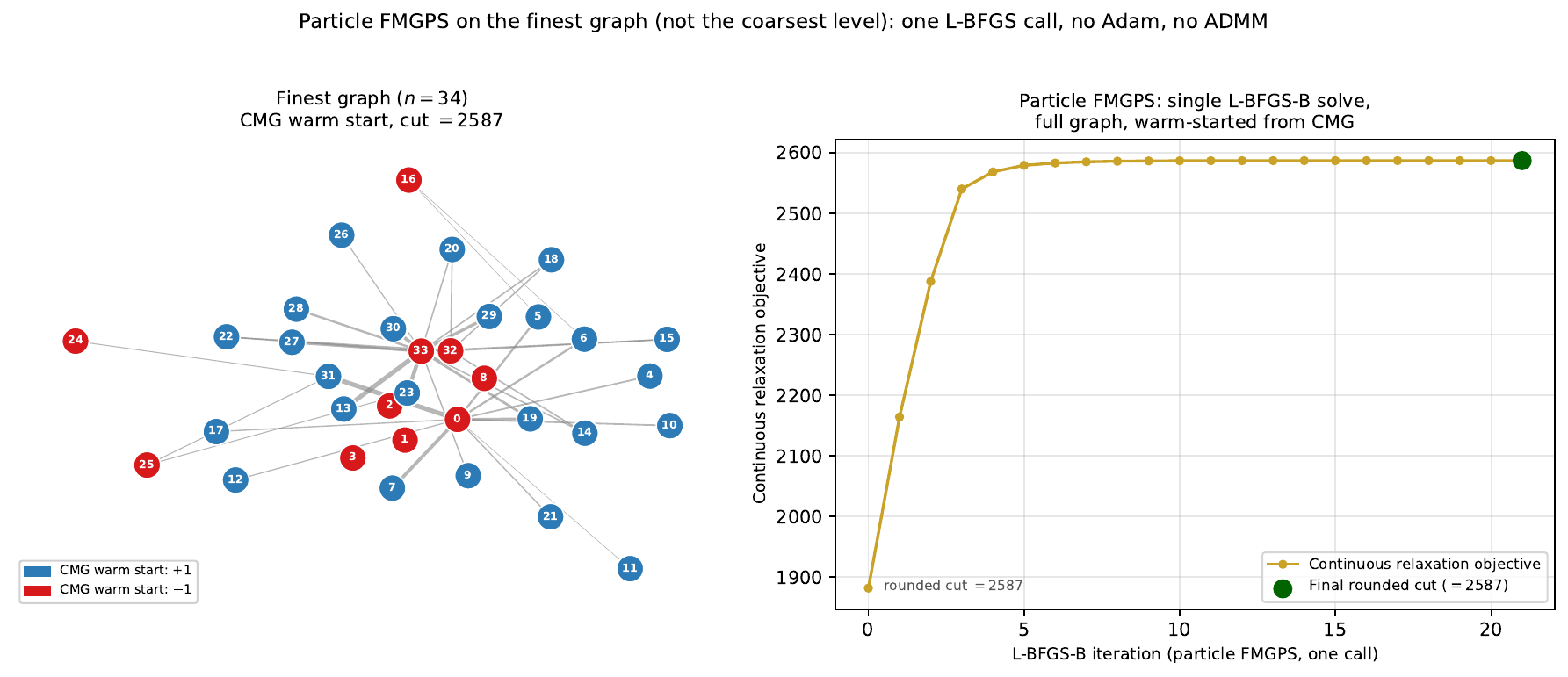}}
\caption{\small Pipeline walkthrough on Zachary's Karate Club. The four panels show
  the input graph, sparsified graph, CMG grouping, and a PPR solve on the
  finest graph initialized from the CMG projection.}
\label{fig:walkthrough}
\end{figure}

On this small instance, the CMG warm start already reaches the best cut found
by the later stages. A Burer-Monteiro~\citep{burer2003} relax-and-reround
pass gives a converged SDP relaxation \emph{estimate} of approximately
$2587.18$, matching
the best rounded cut to a gap of about $-7.7\times10^{-6}$, a negative gap,
which demonstrates directly that this estimate is not a certified upper
bound (a genuine upper bound cannot fall below a feasible cut); see
Table~\ref{tab:tail_bm}. This example is
therefore useful mainly as a transparent mechanical walkthrough, not as
evidence that every stage adds value on every graph.

\subsubsection{The continuous-phase optimizer swap in full}
\label{app:optimizer}

Table~\ref{tab:optimizer} gives the per-instance breakdown for the optimizer
swap summarized in Section~\ref{sec:results}. L-BFGS, Adam, GD, and a
level-stochastic variant use the same CMG warm start, restart budget,
continuation schedule, and downstream tail.

\begin{table}[htbp!]
\small\centering
\caption{\small Continuous-phase optimizer swap: light PPR solved by L-BFGS
  (production), Adam, GD, and level-stochastic SGD, each feeding the identical
  tail (2-opt$\to$KL$\to$3-opt$\to$$200$-restart$\to$memetic GA), shared CMG
  warm start, independently seeded per arm. Columns are cut after the
  continuous phase alone (pre-tail) and the final cut (post-tail), as a ratio
  to L-BFGS's own cut on that instance; last column is GD's continuous-phase
  wall clock relative to L-BFGS's. Two representative rows per regime shown;
  the max spread row summarizes all eight.}
\label{tab:optimizer}
\setlength{\tabcolsep}{4.5pt}
\begin{tabular}{lr rrr rrr r}\toprule
& & \multicolumn{3}{c}{Pre-tail (/L-BFGS)} & \multicolumn{3}{c}{Post-tail (/L-BFGS)} & \\
\cmidrule(lr){3-5}\cmidrule(lr){6-8}
Instance & $n$ & Adam & GD & SGD & Adam & GD & SGD & $t_{\mathrm{GD}}/t_{\mathrm{LB}}$\\\midrule
\multicolumn{9}{l}{\emph{Sparse-signed toroidal}}\\
\texttt{G33} & 2000 & 0.938 & 0.944 & 1.000 & 1.003 & 1.001 & 1.000 & $27.5\times$\\
\texttt{G34} & 2000 & 0.946 & 0.942 & 0.995 & 0.997 & 1.000 & 1.000 & $29.0\times$\\
\multicolumn{9}{l}{\emph{Dense uniform-weight}}\\
\texttt{G1}  & 800  & 0.987 & 0.979 & 0.979 & 1.000 & 1.000 & 1.000 & $11.8\times$\\
\texttt{G22} & 2000 & 0.985 & 0.981 & 0.983 & 0.997 & 0.997 & 0.999 & $15.1\times$\\\midrule
max spread (8 instances) & & \multicolumn{3}{c}{up to $7.4\%$} & \multicolumn{3}{c}{$\leq0.7\%$} & $11$--$30\times$\\
\bottomrule
\end{tabular}
\end{table}

\subsubsection{Internal optimality certificate for \texttt{G81}}
\label{app:g81cert}

The tabu-diversified \citep{glover1986tabu,glover1997tabu} tail reaches a cut of $13962$ on \texttt{G81}, which is
$99.3\%$ of the externally reported value $14060$~\citep{zick2026cosm}. As an
internal check, we solved mixed-integer repair problems with HiGHS
\citep{huangfu2018highs} on every connected residual core of size up to $200$
vertices, covering $40$ cores in total. No improving repair was found. This
certifies the reported cut on those residual cores, but it should not be read
as an independent proof that the global \texttt{G81} optimum is $14060$.

\clearpage
\section{Pseudocode for every pipeline component}
\label{app:component_algorithms}

Appendix~\ref{app:pipeline_end} names the components of the pipeline and shows
how they compose. This appendix gives pseudocode for each component in
isolation, transcribed from the reference implementation that produced every
numerical result reported here, so that the pseudocode and the measured
behaviour agree exactly. Two components already have boxes and are not
repeated: the continuous solver itself is
Algorithm~\ref{alg:fmgps_maxcut} (Appendix~\ref{app:fmgps_algorithm}), and CMG
coarsening is Algorithm~\ref{alg:cmg_coarsen} (Appendix~\ref{app:cmg_steps}).

\SetKwProg{Fn}{Procedure}{:}{}
\SetKwFunction{FSSSparsify}{SSSparsify}
\SetKwFunction{FSQPCoarse}{SQPCoarseSolve}
\SetKwFunction{FOnePass}{OnePass}
\SetKwFunction{FMultiPass}{MultiPass}
\SetKwFunction{FTwoOpt}{TwoOptSweep}
\SetKwFunction{FKLRefine}{KLRefine}
\SetKwFunction{FThreeOpt}{ThreeOptSweep}
\SetKwFunction{FLocalOpt}{LocalOptimize}
\SetKwFunction{FMemeticGA}{MemeticGA}
\SetKwFunction{FBurerMonteiro}{BurerMonteiro}
\SetKwFunction{FLocalFlip}{LocalFlip}

\subsection{Stage 1: Spielman-Srivastava sparsification}

Algorithm~\ref{alg:ss_sparsify} realizes the Spielman-Srivastava
leverage-score sampler~\citep{spielman2011} of
Appendix~\ref{app:leverage}. For $n\leq500$ it computes exact effective
resistances via the Laplacian pseudoinverse; for larger $n$ it falls back to
the degree-proxy approximation $R_{ij}\approx 1/d_i+1/d_j$. It is invoked only
when $|E|>3n$; on the already-sparse toroidal instances that dominate our
benchmarks it is a no-op.

\begin{algorithm2e}[htbp!]
\small
\SetAlgoLined
\DontPrintSemicolon
\caption{\small Spielman-Srivastava effective-resistance sparsification}
\label{alg:ss_sparsify}
\KwIn{$G=(V,E,w)$; target edge count $t$}
\KwOut{sparsified $G'=(V,E',w')$}
\Fn{\FSSSparsify{$G$, $t$}}{
  \lIf{$|E|\leq t$}{\Return $G$ \tcp*{no-op}}
  \eIf{$n\leq 500$}{
    build Laplacian $L$ from $|w|$; eigendecompose $L=Q\Lambda Q^\top$\;
    $R_{ij}\gets\sum_{k:\lambda_k>\mathrm{tol}} (Q_{ik}-Q_{jk})^2/\lambda_k$ for
      every edge $(i,j)$\tcp*{exact resistances}
  }{
    $d_i\gets\sum_{j\sim i}|w_{ij}|$ for every vertex\;
    $R_{ij}\gets 1/d_i + 1/d_j$ for every edge $(i,j)$\tcp*{degree-proxy fallback}
  }
  $\ell_e \gets |w_e|\,R_e$ for every edge $e$\tcp*{leverage score}
  \ForEach{edge $e=(i,j)$}{
    $p_e \gets \min\!\big(1,\; \ell_e\, t / \textstyle\sum_{e'}\ell_{e'}\big)$\;
    with probability $p_e$: keep $e$ with rescaled weight
      $\mathrm{sign}(w_e)\,|w_e|/p_e$\;
  }
  \Return sparsified $G'=(V,E',w')$\;
}
\end{algorithm2e}

\subsection{Stage 3: coarse sweet-spot SQP solve}

Algorithm~\ref{alg:sqp_coarse} is the Burer-Monteiro~\citep{burer2003}
moment-matrix solve applied at the bottom of the CMG hierarchy, and is used
only at a level with $n_\ell\leq20$ vertices, where a small dense
SLSQP~\citep{kraft1988sqp} solve is affordable. Note that
here the binary constraint enters as an explicit equality on the moments of
Eq.~\eqref{eq:moment}  ($\mu^{(4)}-2\mu^{(2)}+\mu^{(0)}=0$, i.e.\
$\Ex[(x^2-1)^2]=0$, which forces each marginal onto $\{\pm1\}$), rather than
through the penalty schedule the larger levels use. Throughout,
$\mu_i^{(k),l}$ denotes the order-$k$ moment entry read off the Gram matrix
$R_i^{(l)\top}R_i^{(l)}$ of the rank-restricted factor.

\begin{algorithm2e}[htbp!]
\small
\SetAlgoLined
\DontPrintSemicolon
\caption{\small Coarse sweet-spot SQP solve ($n_\ell\leq20$)}
\label{alg:sqp_coarse}
\KwIn{coarse graph $G_\ell$; CMG projection $x_0$}
\KwOut{best $\pm1$ partition on $G_\ell$}
\Fn{\FSQPCoarse{$G_\ell$, $x_0$}}{
  $\mathrm{best}\gets-\infty$\;
  \For{$x_{\mathrm{init}} \in \{x_0,\ \text{one random } \pm1\ \text{draw}\}$}{
    initialize BM factors $R_i^{(l)}\in\mathbb{R}^{\mathrm{rank}\times3}$ at the
      point mass $x_{\mathrm{init}}$\;
    solve via SLSQP:
      $\displaystyle\min -\tfrac14\sum_{(i,j)\in E} w_{ij}\big(1-\textstyle\sum_l \mu_i^{(1),l}\mu_j^{(1),l}\big)$\;
    \phantom{solve via SLSQP:}
      s.t.\ $\sum_l \mu_i^{(0),l}=1$,\
      $\sum_l(\mu_i^{(4),l}-2\mu_i^{(2),l}+\mu_i^{(0),l})=0$\ \ $\forall i$\;
    $\hat{x}_i \gets \mathrm{sign}\big(\sum_l \mu_i^{(1),l}\big)$ for all $i$\;
    \FLocalFlip{$\hat{x}$, $G_\ell$}\;
    keep $\hat{x}$ if $\mathrm{Cut}(\hat{x}) > \mathrm{best}$\;
  }
  \Return the best $\hat{x}$ found across the two restarts\;
}
\end{algorithm2e}

\subsection{Stage 4: one particle pass, multi-pass, and 2-opt}

Algorithm~\ref{alg:one_pass} is Algorithm~\ref{alg:fmgps_maxcut} restricted to
a fixed $L,R,k$ and packaged for repeated warm-started invocation; it is what
Algorithm~\ref{alg:multi_pass} chains across passes. We keep it as a separate
box because the multi-pass driver of Appendix~\ref{app:branches} calls it
directly rather than through the full solver. Algorithm~\ref{alg:two_opt} is
the 2-opt~\citep{croes1958tsp} closing polish whose \emph{ordering} matters: applied once at the end it is
worth $+0.023$ in mean ratio, while applying it after every pass is worse
(Appendix~\ref{app:t2g20_ladder}).

\begin{algorithm2e}[htbp!]
\small
\SetAlgoLined
\DontPrintSemicolon
\caption{\small One particle pass}
\label{alg:one_pass}
\KwIn{$J$; warm start $x_{\mathrm{warm}}$; $L,R,k$; sat; $\rho_1<\cdots<\rho_S$}
\KwOut{best cut found and the corresponding partition $\hat{x}$}
\Fn{\FOnePass{$J$, $x_{\mathrm{warm}}$, $L,R,k$, sat, $\rho_1<\cdots<\rho_S$}}{
  (identical body to Algorithm~\ref{alg:fmgps_maxcut}'s restart loop,
    warm-started from $x_{\mathrm{warm}}$)\;
  \Return best cut found, and the corresponding partition $\hat{x}$\;
}
\end{algorithm2e}

\begin{algorithm2e}[htbp!]
\small
\SetAlgoLined
\DontPrintSemicolon
\caption{\small Multi-pass restarts}
\label{alg:multi_pass}
\KwIn{$J$; initial partition $x_0$; edges; weights; number of passes}
\KwOut{best cut and its partition}
\Fn{\FMultiPass{$J$, $x_0$, edges, weights, passes}}{
  $x_{\mathrm{warm}} \gets x_0$;\ $\mathrm{best}\gets-\infty$\;
  \For{$p = 1,\ldots,\mathrm{passes}$}{
    $(c,\hat{x}) \gets$ \FOnePass{$J$, $x_{\mathrm{warm}}$, \ldots}\;
    \If{$c > \mathrm{best}$}{
      $\mathrm{best}\gets c$;\ $x_{\mathrm{warm}}\gets\hat{x}$
        \tcp*{chain the best binary forward}
    }
  }
  \Return $\mathrm{best}$, $x_{\mathrm{warm}}$\;
}
\end{algorithm2e}

\begin{algorithm2e}[htbp!]
\small
\SetAlgoLined
\DontPrintSemicolon
\caption{\small 2-opt sweep}
\label{alg:two_opt}
\KwIn{partition $x$; $G=(V,E,w)$}
\KwOut{number of accepted double-flips}
\Fn{\FTwoOpt{$x$, $G$}}{
  $g_i \gets \sum_{j\sim i} w_{ij}x_j$ for all $i$\;
  \Repeat{not $\mathrm{improved}$}{
    $\mathrm{improved}\gets\mathrm{false}$\;
    \For{$i = 1,\ldots,n$}{
      \For{$j = i+1,\ldots,n$}{
        $\Delta \gets 2x_ig_i + 2x_jg_j - 4w_{ij}x_ix_j$\;
        \If{$\Delta > 10^{-10}$}{
          update $g$ for neighbors of $i$ and of $j$; flip
            $x_i\gets-x_i$, $x_j\gets-x_j$\;
          $\mathrm{improved}\gets\mathrm{true}$\;
        }
      }
    }
  }
  \Return number of accepted double-flips\;
}
\end{algorithm2e}

\subsection{Stage 5 (optional discrete tail): Kernighan-Lin and 3-opt}

Algorithms~\ref{alg:kl_refine} (Kernighan-Lin~\citep{kernighan1970efficient})
and~\ref{alg:three_opt} (3-opt~\citep{lin1973heuristic}) are the first two tail
stages ablated in Appendix~\ref{app:tail_anatomy}; both are monotone (neither
ever decreases the cut) and neither is part of the headline light pipeline. KL
is the stage that does the bulk of the tail's work, which is why the endpoint is
set by the search operator rather than by the starting partition.

\begin{algorithm2e}[htbp!]
\small
\SetAlgoLined
\DontPrintSemicolon
\caption{\small Kernighan-Lin variable-depth exchange}
\label{alg:kl_refine}
\KwIn{partition $x$; $G$}
\KwOut{total cut improvement}
\Fn{\FKLRefine{$x$, $G$}}{
  $\mathrm{total}\gets0$\;
  \Repeat{$\mathrm{best\_cum} \leq 10^{-9}$}{
    $g_i \gets \sum_{j\sim i} w_{ij}x_j$ for all $i$; unlock all vertices\;
    $\mathrm{cum}\gets0$;\ $\mathrm{best\_cum}\gets0$;\
      $\mathrm{best\_k}\gets0$;\ $\mathrm{moves}\gets[\,]$\;
    \For{step $=1,\ldots,n$}{
      $v \gets \arg\max_{v\ \mathrm{unlocked}} x_vg_v$;
        \textbf{break} if none unlocked\;
      $\mathrm{cum}\gets\mathrm{cum}+x_vg_v$; update $g$ for neighbors of $v$
        under $x_v\to-x_v$\;
      flip $x_v\gets-x_v$; lock $v$; append $v$ to $\mathrm{moves}$\;
      \If{$\mathrm{cum} > \mathrm{best\_cum}$}{
        $\mathrm{best\_cum}\gets\mathrm{cum}$;\
          $\mathrm{best\_k}\gets|\mathrm{moves}|$\;
      }
    }
    undo every move in $\mathrm{moves}$ after index $\mathrm{best\_k}$
      \tcp*{keep only the best prefix}
    \lIf{$\mathrm{best\_cum} > 10^{-9}$}{$\mathrm{total}\gets\mathrm{total}+\mathrm{best\_cum}$}
  }
  \Return $\mathrm{total}$\;
}
\end{algorithm2e}

\begin{algorithm2e}[htbp!]
\small
\SetAlgoLined
\DontPrintSemicolon
\caption{\small Connected-triple 3-opt}
\label{alg:three_opt}
\KwIn{partition $x$; $G$}
\KwOut{total cut improvement}
\Fn{\FThreeOpt{$x$, $G$}}{
  $g_i \gets \sum_{j\sim i} w_{ij}x_j$ for all $i$;\ $\mathrm{total}\gets0$\;
  \Repeat{not $\mathrm{improved}$}{
    $\mathrm{improved}\gets\mathrm{false}$\;
    \ForEach{edge $(a,b)$, $a<b$}{
      $\Delta_{ab} \gets x_ag_a + x_bg_b - 2w_{ab}x_ax_b$
        \tcp*{pair-flip $\Delta\mathrm{Cut}$}
      \If{$\Delta_{ab} > 10^{-9}$}{
        flip $\{a,b\}$, update $g$;
          $\mathrm{total}\gets\mathrm{total}+\Delta_{ab}$;
          $\mathrm{improved}\gets\mathrm{true}$; \textbf{continue}\;
      }
      \ForEach{$c$ adjacent to $a$ or $b$, $c\notin\{a,b\}$}{
        $S\gets\{a,b,c\}$;\
          $\Delta_S \gets \sum_{v\in S}x_vg_v - 2\sum_{(u,v)\in E\cap S}w_{uv}x_ux_v$\;
        \If{$\Delta_S > 10^{-9}$}{
          flip $S$, update $g$; $\mathrm{total}\gets\mathrm{total}+\Delta_S$;
            $\mathrm{improved}\gets\mathrm{true}$; \textbf{break}\;
        }
      }
    }
  }
  \Return $\mathrm{total}$\;
}
\end{algorithm2e}

\subsection{Stage 5 (optional discrete tail): local optimum, memetic GA, and Burer-Monteiro}

Algorithm~\ref{alg:local_optimize} drives a partition to a joint
2-opt~\citep{croes1958tsp}/Kernighan-Lin~\citep{kernighan1970efficient}/3-opt~\citep{lin1973heuristic}
local optimum; it is the polish step used both by the
multi-restart diagnostic and, inside Algorithm~\ref{alg:memetic_ga}, to
evaluate every population member and every child.
Algorithm~\ref{alg:memetic_ga} is the memetic genetic
algorithm~\citep{moscato2004memetic}, using the Chu-Beasley~\citep{chu1998genetic}
steady-state replacement scheme, whose
\texttt{ga\_only} control appears in Appendix~\ref{app:tail_anatomy}, and
Algorithm~\ref{alg:burer_monteiro} is the Burer-Monteiro~\citep{burer2003}
relax-and-reround pass, computed via the Mixing
Method~\citep{wang2017mixing}, that
produces the empirical SDP-relaxation estimate of Table~\ref{tab:tail_bm}.
Its default rank
$\lceil\sqrt{2n}\rceil$ is the threshold above which the rank-restricted
problem is \emph{generically} free of spurious local
minima~\citep{boumal2016burermonteiro}, but the Mixing Method's convergence
check (a tolerance on successive objective values) does not itself verify
that condition, so the resulting estimate is not a certified bound; 
Appendix~\ref{app:walkthrough}'s Karate Club walkthrough gives a direct,
concrete negative-gap counterexample.

\begin{algorithm2e}[htbp!]
\small
\SetAlgoLined
\DontPrintSemicolon
\caption{\small Joint 2-opt/KL/3-opt local optimum}
\label{alg:local_optimize}
\KwIn{partition $x$; $G$}
\KwOut{$x$ at a joint 2-opt/KL/3-opt local optimum (modified in place)}
\Fn{\FLocalOpt{$x$, $G$}}{
  \FTwoOpt{$x$, $G$}\;
  \For{$\mathrm{it} =1,\ldots,50$}{
    $a \gets$ \FKLRefine{$x$, $G$};\ $b \gets$ \FThreeOpt{$x$, $G$}\;
    \lIf{$a+b < 10^{-9}$}{\textbf{break}}
  }
}
\end{algorithm2e}

\begin{algorithm2e}[htbp!]
\small
\SetAlgoLined
\DontPrintSemicolon
\caption{\small Memetic GA (Chu-Beasley steady-state, agreement crossover)}
\label{alg:memetic_ga}
\KwIn{$G$; population size $\mathrm{pop}$; generations $\mathrm{gens}$}
\KwOut{fittest population member and its cut}
\Fn{\FMemeticGA{$G$, $\mathrm{pop}$, $\mathrm{gens}$}}{
  \For{$i = 1,\ldots,\mathrm{pop}$}{
    $P_i \gets$ random $\pm1$; \FLocalOpt{$P_i$, $G$};
      $\mathrm{fit}_i\gets\mathrm{Cut}(P_i)$\;
  }
  \For{$\mathrm{gen} = 1,\ldots,\mathrm{gens}$}{
    $p_1,p_2 \gets$ two distinct binary-tournament winners from $P$\;
    align $P_{p_2}$ to $P_{p_1}$ by global sign (maximize agreement)\;
    $\mathrm{child}_i \gets P_{p_1,i}$ where they agree; a random $\pm1$ coin
      flip where they disagree\;
    \FLocalOpt{$\mathrm{child}$, $G$}; $c_f\gets\mathrm{Cut}(\mathrm{child})$\;
    \If{no population member has $|\mathrm{fit}-c_f|<0.5$}{
      \tcp{reject duplicate-cut children}
      $w \gets \arg\min_i \mathrm{fit}_i$\;
      \If{$c_f > \mathrm{fit}_w$}{
        $P_w \gets \mathrm{child}$;\ $\mathrm{fit}_w \gets c_f$
          \tcp*{steady-state replace-worst}
      }
    }
  }
  \Return $\arg\max_i \mathrm{fit}_i$ and its cut\;
}
\end{algorithm2e}

\begin{algorithm2e}[htbp!]
\small
\SetAlgoLined
\DontPrintSemicolon
\caption{\small Burer-Monteiro relax-and-reround (Mixing Method)}
\label{alg:burer_monteiro}
\KwIn{$G$; rank $p$; number of hyperplanes $h$}
\KwOut{$(\mathrm{bound},\mathrm{best})$, whose difference is an empirical,
  not certified, optimality gap}
\Fn{\FBurerMonteiro{$G$, $p$, $h$}}{
  \lIf{$p \leq 0$}{$p \gets \lceil\sqrt{2n}\rceil$\tcp*{default rank}}
  $V_i \gets$ random unit vector in $\mathbb{R}^p$ for all $i$\;
  \Repeat{$|\mathrm{obj}_{\mathrm{prev}}-\mathrm{obj}| < \mathrm{tol}\cdot(1+|\mathrm{obj}|)$}{
    \For{$i=1,\ldots,n$}{
      $g_i \gets \sum_{j\sim i} w_{ij}V_j$;\ $V_i \gets -g_i/\lVert g_i\rVert$
        \tcp*{anti-align (coordinate update)}
    }
    $\mathrm{obj} \gets \tfrac12\big(W - \sum_{(i,j)\in E} w_{ij}\langle V_i,V_j\rangle\big)$\;
  }
  $\mathrm{bound}\gets\mathrm{obj}$;\ $\mathrm{best}\gets-\infty$\;
  \For{$h' = 1,\ldots,h$}{
    draw random hyperplane $r\sim\mathcal{N}(0,I_p)$;
      $x_i\gets\mathrm{sign}(\langle V_i,r\rangle)$ for all $i$\;
    \FLocalOpt{$x$, $G$}; keep $x$ if $\mathrm{Cut}(x)>\mathrm{best}$\;
  }
  \Return $(\mathrm{bound}, \mathrm{best})$
    \tcp*{$\mathrm{bound}-\mathrm{best}$ is an empirical, not certified, gap}
}
\end{algorithm2e}

\end{document}